\newcommand{\Dset}{\D}
\newcommand{\Sset}{\mathcal{S}}
\newcommand{\Aset}{\mathcal{A}}
\newcommand{\Rset}{\mathbb{R}}
\newcommand{\Pset}{\mathcal{P}}
\newcommand{\mt}{{\operatorname{T}}}
\newcommand{\mi}{{-1}}
\newcommand{\defeq}{:=}
\newcommand{\norm}[1]{\left\|#1\right\|}
\newcommand{\secref}[1]{Section~\ref{#1}}
\newcommand{\thmref}[1]{Theorem~\ref{#1}}
\newcommand{\corref}[1]{Corollary~\ref{#1}}
\newcommand{\Lrghat}{\hat{L}_{\operatorname{RG}}}
\newcommand{\Lfvihat}{\hat{L}_{\operatorname{FVI}}}
\newcommand{\Lk}{L_{\k}}
\newcommand{\Lkhat}{\hat{L}_{\k}}
\newcommand{\Lktilde}{\tilde{L}_{\k}}
\newcommand{\thetaTD}{\hat{\theta}_{\operatorname{TD}}}
\newcommand{\sbar}{\bar{s}}
\newcommand{\abar}{\bar{a}}
\newcommand{\wb}{\pmb{w}}
\newcommand{\pv}{\rho}
\newcommand{\Fset}{\mathcal{F}}
\newcommand{\lihong}[1]{\textcolor{blue}{[LL: #1]}}
\title{\huge A Kernel Loss for Solving the Bellman Equation}
\author{
  Yihao Feng \\
  UT Austin \\
  \texttt{yihao@cs.utexas.edu}\\
  \And   
  Lihong Li \\ 
  Google Research \\
  \texttt{lihong@google.com}\\
  \And 
  Qiang Liu \\
  UT Austin \\
  \texttt{lqiang@cs.utexas.edu}
}
\date{}
\begin{document}
\maketitle

\begin{abstract}
Value function learning plays a central role in many state-of-the-art reinforcement-learning algorithms.  Many popular algorithms like Q-learning do not optimize any objective function, but are fixed-point iterations of some variants of Bellman operator that are not necessarily a contraction.  As a result, they may easily lose convergence guarantees, as can be observed in practice.
In this paper, we propose a novel loss function, which can be optimized using standard gradient-based methods with guaranteed convergence. 
The key advantage is that its gradient can be easily approximated using sampled transitions, avoiding the need for double samples required by prior algorithms like residual gradient.  Our approach may be combined with general function classes such as neural networks, using either on- or off-policy data, and is shown to work reliably and effectively in several benchmarks, including classic problems where standard algorithms are known to diverge.
\end{abstract}

\section{Introduction}

The goal of a reinforcement learning (RL) agent is to optimize its policy to maximize the long-term return through repeated interaction with an external environment.  The interaction is often modeled as a Markov decision process, whose value functions are the unique fixed points of their corresponding \emph{Bellman operators}.   Many state-of-the-art algorithms, including TD($\lambda$), Q-learning and actor-critic, have value function learning as a key component~\citep{sutton18reinforcement,szepesvari10algorithms}.

A fundamental property of the Bellman operator is that it is a contraction in the value function space in the $\ell_\infty$-norm~\citep{puterman94markov}.  Therefore, starting from any bounded initial function, with repeated applications of the operator, the value function converges to the true value function.  A number of algorithms are directly inspired by this property, such as temporal difference~\citep{sutton88learning} and its many variants~\citep{bertsekas96neuro,sutton18reinforcement,szepesvari10algorithms}.  Unfortunately, when function approximation such as neural networks is used to represent the value function in large-scale problems, the critical property of contraction is generally lost~\citep[e.g.,][]{boyan95generalization,baird95residual,tsitsiklis97analysis}, except in rather restricted cases~\citep[e.g.,][]{gordon95stable,tsitsiklis97analysis}.  Not only is this instability one of the core theoretical challenges in RL, but it also has broad practical significance, given the growing popularity of algorithms like DQN~\citep{mnih15human}, A3C~\citep{mnih16asynchronous} and their many variants~\citep[e.g.,][]{gu16continuous,schulman16high,wang16dueling,wu17scalable}, whose stability largely depends on the contraction property.  The instability becomes even harder to avoid, when training data (transitions) are sampled from an off-policy distribution, a situation known as the \emph{deadly triad}~\citep[Sec.~11.3]{sutton18reinforcement}.

The brittleness of Bellman operator's contraction property has inspired a number of works that aim to reformulate value function learning as an optimization problem, where standard algorithms like stochastic gradient descent can be used to minimize the objective, without the risk of divergence (under mild and typical assumptions).  One of the earliest attempts is residual gradient, or RG~\citep{baird95residual}, which relies on minimizing squared temporal differences.  The algorithm is convergent, but its objective is not necessarily a good proxy due to a well-known ``double sample'' problem.  As a result, it may converge to an inferior solution; see Sections~\ref{sec:background} and \ref{sec:experiments} for further details and numerical examples.  This drawback is inherited by similar algorithms like PCL~\citep{nachum17bridging,nachum2018trustpcl}.

Another line of work seeks alternative objective functions, the minimization of which leads to desired value functions~\citep[e.g.,][]{sutton09fast,maei11gradient,liu15finite,dai2017learning}.  Most existing works are either for linear approximation, or for evaluation of a fixed policy.  An exception is the SBEED algorithm~\citep{dai18sbeed}, which transforms the Bellman equation to an equivalent saddle-point problem, and can use nonlinear function approximations.  While SBEED is provably convergent under fairly standard conditions, it relies on solving a minimax problem, whose optimization can be rather challenging in practice, especially with nonconvex approximation classes like neural networks.

In this paper, we propose a novel loss function for value function learning.  It avoids the double-sample problem (unlike RG), and can be easily estimated and optimized using sampled transitions (in both on- and off-policy scenarios).  This is made possible by leveraging an important property of integrally strictly positive definite kernels~\citep{stewart1976positive,sriperumbudur2010hilbert}.  This new objective function allows us to derive simple yet effective algorithms to approximate the value function, without risking instability or divergence (unlike TD algorithms),
or solving a more sophisticated saddle-point problem (unlike SBEED).   
Our approach also allows great flexibility in choosing the value function approximation classes, including nonlinear ones like neural networks.  
Experiments in several benchmarks demonstrate the effectiveness of our method, for both policy evaluation and optimization problems.  We will focus on the batch setting (or the growing-batch setting with a growing replay buffer), and leave the online setting for future work.

\iffalse
show that the new approach yields effective and convergent solution in various challenging cases, addressing both the classical double-sample challenge of RG and non-convergence of TD. 
In addition, when combined with TrustPCL \citep{nachum2018trustpcl}, our method manages to learn better policies on Mujoco benchmarks.
\lihong{should we explain why choose TrustPCL?  Is it (among) SOTA?}
\fi

\section{Background}
\label{sec:background}

This section starts with necessary notation and background information, then reviews two representative algorithms that work with general, nonlinear (differentiable) function classes.

\paragraph{Notation.} A Markov decision process (MDP) is denoted by $M = \langle \Sset, \Aset, P, R, \gamma)$, where $\Sset$ is a (possibly infinite) state space, $\Aset$ an action space, $P(s'~|~s, a)$ the transition probability, 
$R(s, a)$ the average immediate reward, 
and $\gamma\in (0,1)$ a discount factor.
The value function of a policy $\pi:\Sset\mapsto\Rset_+^\Aset$, denoted $V^\pi(s) \defeq 
\E\left [\sum_{t=0}^\infty \gamma^t  R(s_t,a_t)~|~ s_0 = s,a_t\sim\pi(\cdot,s_t)\right]\,$, measures the expected long-term return of a state.  It is well-known that $V=V^\pi$ is the unique 
solution to the \emph{Bellman equation}~\citep{puterman94markov}, $V = \B_\pi V$, where $\B_\pi:\Rset^\Sset\to\Rset^\Sset$ is the \emph{Bellman operator}, defined by 
$$
\B_\pi V(s) \defeq \E_{a \sim \pi(\cdot|s), s'\sim P(\cdot|s,a)}[R(s, a) + \gamma V(s') ~|~s]\,.
$$
While we develop and analyze our approach mostly for $\B_\pi$ given a fixed $\pi$ (policy evaluation), we will also extend the approach to the controlled case of policy optimization, where the corresponding Bellman operator becomes
\[
\B V(s) \defeq \max_{a} \E_{s'\sim P(\cdot|s,a)}[R(s,a) + \gamma V(s') ~|~s,a]\,.
\]
The unique fixed point of $\B$ is known as the optimal value function, denoted $V^*$; that is, $\B V^* = V^*$.

\iffalse
We are interested in finding an optimal policy $\pi(a~|~s)$, 
which is a probability over action $a$ conditioned on the state $s$,   
to maximize the expected discounted cumulative reward,
\lihong{Double check: do we need $\pv(\pi)$ at all?}
$$
\pv(\pi) \defeq \E_{d_\pi}\left [\sum_{t=0}^\infty \gamma^t  R(s_t,a_t)\right],
$$
where $d_\pi$ denotes the distribution of trajectory $\{s_t,a_t\}_{t=0}^\infty$ when taking policy $\pi$, starting from an initial state $s_0 \in \Sset$ drawn from some fixed distribution.  Usually, transition probabilities and reward function are unknown.

For a given policy $\pi$, its value function  $V^\pi(s)$ is defined to be the expected discounted cumulative reward when starting from state $s$ and following policy $\pi$:
$$
V^\pi(s) \defeq 
\E_{d_\pi}\left [\sum_{t=0}^\infty \gamma^t  R(s_t,a_t)~|~ s_0 = s\right]. 
$$
It is well-known that $V=V^\pi$ is the unique 
solution to the \emph{Bellman equation}~\citep{puterman94markov}:
$$
V = \B_\pi V\,, 
$$
where $\B_\pi:\Rset^\Sset\to\Rset^\Sset$ is the \emph{Bellman operator}, defined by 
$$
\B_\pi V(s) \defeq \E_{a \sim \pi(\cdot|s), s'\sim P(\cdot|s,a)}[R(s, a) + \gamma V(s') ~|~s]\,.
$$

While we develop and analyze our approach mostly for the policy evaluation case, we will also extend our approach to policy optimization, where the Bellman operator becomes
\[
\B V(s) \defeq \max_{a} \E_{s'\sim P(\cdot|s,a)}[R(s,a) + \gamma V(s') ~|~s,a]\,.
\]
The unique fixed point of $\B$ is known as the optimal value function, denoted $V^*$.
\fi

Our work is built on top of an alternative to the fixed-point view above:
given some fixed distribution $\mu$ whose support is $\Sset$, %
$V^\pi$ is the unique minimizer of the \emph{squared Bellman error}:
$$
L_2(V) \defeq \norm{\B_\pi V - V}_\mu^2 = \E_{s\sim\mu} \big[ \left (
\B_\pi V(s) - V(s)
\right )^2 \big]\,.
$$
Denote by $\R_\pi V \defeq \B_\pi V - V$ the Bellman error operator. 
With a set $\Dset = \{(s_i, a_i, r_i, s_i')\}_{1\le i \le n}$ of transitions where $a_i \sim \pi(\cdot|s_i)$, 
the Bellman operator in state $s_i$ can be approximated by \emph{bootstrapping:} 
$\hat \B_\pi V(s_i) \defeq 
r_i + \gamma V(s_i')$.  Similarly, $\hat \R_\pi V(s_i) \defeq 
r_i + \gamma V(s_i') - V(s_i)$.  
Clearly, one has %
$\E[\hat \B_\pi V_\theta(s_i)|s_i] = \B_\pi V_\theta(s_i)$ and  $\E[\hat \R_\pi V_\theta(s_i)|s_i] = \R_\pi V_\theta(s_i)$. 
In the literature, $\hat \B_\pi V_\theta(s_i) - V_\theta(s_i)$ is also known as the temporal difference or TD error, whose expectation is the Bellman error.
For notation, all distributions are equivalized to their probability density functions in the work. 

\paragraph{Basic Algorithms.}
We are interested in estimating $V^\pi$,
from a parametric family $\{V_\theta \colon \theta\in \Theta\}$, from data $\Dset$. 
The \emph{residual gradient} algorithm~\citep{baird95residual} minimizes the \emph{squared TD error}:
\begin{equation}
\Lrghat(V_\theta) \defeq \frac{1}{n} \sum_{i=1}^n \left(\hat\B_\pi V_\theta(s_i) - V_\theta(s_i)\right)^2, \label{equ:rg}
\end{equation}
with gradient descent update $\theta_{t+1} = \theta_t  - \epsilon \nabla_\theta\Lrghat(V_{\theta_t})$, where 
$$
\nabla_\theta\Lrghat(V_\theta) = \frac{2}{n} \sum_{i=1}^n \left(\big(\hat\B_\pi V_\theta(s_i) - V_\theta(s_i)\big) \cdot \nabla_\theta\big(\hat\B_\pi V_\theta(s_i) - V_\theta(s_i)\big)\right).
$$
However, the objective in \eqref{equ:rg} is a biased and inconsistent estimate of the squared Bellman error. This is because  $\E_{s\sim\mu}[\Lrghat(V)] = L_2(V) + \E_{s\sim\mu}\big[\var(\hat\B_\pi V(s)|s)\big] \neq L_2(V)$, where  
there is an extra term that involves the conditional variance of the empirical Bellman operator, which does not vanish unless the state transitions are deterministic. %
As a result, RG can converge to incorrect value functions (see also \secref{sec:experiments}).
With random transitions, correcting the bias 
requires double samples (i.e., at least two independent samples of $(r,s')$ for the same $(s,a)$ pair) 
to estimate the conditional variance. %

More popular algorithms in the literature are instead based on fixed-point iterations, using $\hat\B_\pi$ to construct a target value to update $V_\theta(s_i)$.  An example is \emph{fitted value iteration}, or FVI~\citep{bertsekas96neuro,munos08finite}, which includes as special cases the empirically successful DQN and variants, and also serves as a key component in many state-of-the-art actor-critic algorithms. 
In its basic form, FVI starts from an initial $\theta_0$, and iteratively updates the parameter by
\begin{align} \label{equ:fvi}
 \theta_{t+1} = \argmin_{\theta\in\Theta}\left\{ \Lfvihat^{(t+1)}(V_\theta) \defeq \frac{1}{n}\sum_{i=1}^n \left ( V_\theta(s_i) - \hat\B_\pi V_{\theta_t}(s_i)\right )^2\right\}\,.
\end{align}
Different from RG, when gradient-based methods are applied to solve \eqref{equ:fvi}, 
the current parameter $\theta_t$ is treated as a constant: 
$\nabla_\theta\Lfvihat^{(t+1)}(V_\theta)
= \frac{2}{n} \!\sum_{i=1}^n \big(V_\theta(s_i) - \hat\B_\pi V_{\theta_t}(s_i)\big) \nabla_\theta V_\theta(s_i)$.
TD(0)~\citep{sutton88learning} may be viewed as a stochastic version of FVI, where a single sample (i.e., $n=1$) is drawn randomly (either from a stream of transitions or from a replay buffer) to estimate the gradient of \eqref{equ:fvi}.

Being fixed-point iteration methods, 
FVI-style algorithms \emph{do not} optimize any objective function, and their convergence is guaranteed only in rather restricted cases~\citep[e.g.,][]{gordon95stable,tsitsiklis97analysis,antos08learning}.  Such divergent behavior is well-known and empirically observed~\citep{baird95residual,boyan95generalization}; see \secref{sec:experiments} for more numerical examples. 
It creates substantial difficulty in parameter tuning and model selection in practice.

\section{Kernel Loss for Policy Evaluation}
\label{sec:theory}

Much of the algorithmic challenge described earlier lies in the difficulty in estimating squared Bellman error from data.  In this section, we address this difficulty by proposing a new 
loss function that is more amenable to statistical estimation from  empirical data.  Proofs are deferred to the appendix.

Our framework relies on  
an \emph{integrally strictly positive definite (ISPD) kernel} %
$\k\colon\Sset\times\Sset\to\RR$, which is a symmetric bi-variate function that satisfies
$\norm{f}_{\k}^2 \defeq \int_{\Sset^2} \k(s,\bar s)f(s) f(\bar s) ~ds~d\bar s > 0$,
for any non-zero $L_2$-integrable function $f$. 
For simplicity, we consider two functions $f$ and $g$ equal if $(f-g)$ has a zero $L_2$ norm. 
We call $\norm{f}_{\k}$ the $\k$-norm of $f$.  Many commonly used kernels, such as Gaussian RBF kernel $\k(s,\bar s) = \exp(-\norm{s-\bar s}_2^2/h)$ is ISPD. 
More discussion on ISPD kernels can be found in \citet{stewart1976positive} and \citet{sriperumbudur2010hilbert}.

\subsection{The New Loss Function}

Recall that $\R_\pi V = \B_\pi V - V$ is the Bellman error operator. 
Our new loss function is defined by  
\begin{equation}\label{equ:ourloss}
\Lk(V) 
 = \norm{\R_\pi V}_{\k, \mu}^2  
 \defeq \E_{s,\bar s\sim \mu} \left [\k(s,\bar s)  \cdot \R_\pi V(s) \cdot \R_\pi V(\bar s) \right ]\,,
\end{equation} 
where $\mu$ is any positive density function on states $s$, and $s,\bar s\sim \mu$ means $s$ and $\bar s$ are drawn i.i.d. from $\mu$. 
Here, $\norm{\cdot}_{\k, \mu}$ is regarded as the $\k$-norm under measure $\mu$.  It is easy to show that  $\norm{f}_{\k,\mu} = \norm{f\mu}_\k$. 
Note that $\mu$ can be either 
the visitation distribution %
under policy $\pi$ (the \emph{on-policy} case),
or some other distribution  
(the \emph{off-policy} case).  Our approach handles both cases in a unified way.
The following theorem shows that the loss $\Lk$ is consistent:
\begin{thm}\label{thm:validate}
Let $\k$ be an ISPD kernel 
and assume $\mu(s) >0$,$\forall s\in \mathcal \Sset$. 
Then, $\Lk(V) \geq 0$ for any $V$; and $\Lk(V) = 0$ if and only if $V=V^\pi$. In other words, $V^\pi = \argmin_{V} \Lk(V)$.
\end{thm}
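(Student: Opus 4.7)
The plan is to recast $\Lk(V)$ as the squared $\k$-norm of a single auxiliary function and then invoke the defining property of ISPD kernels, reducing the claim to the classical uniqueness of fixed points of $\B_\pi$.

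Concretely, unfolding the expectation back into an integral gives
$$
\Lk(V) \;=\; \int_{\Sset\times\Sset} \k(s,\bar s)\, \R_\pi V(s)\, \R_\pi V(\bar s)\, \mu(s)\mu(\bar s)\, ds\, d\bar s \;=\; \norm{(\R_\pi V)\mu}_{\k}^{\,2},
$$
which is exactly the identity $\norm{f}_{\k,\mu}=\norm{f\mu}_{\k}$ noted right after the definition of the loss. Setting $g \defeq (\R_\pi V)\mu$, the whole loss becomes $\norm{g}_\k^{\,2}$.

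Now apply the ISPD hypothesis to $g$: by definition $\norm{g}_\k^{\,2}\ge 0$ for every $L_2$-integrable $g$, with equality iff $g\equiv 0$ almost everywhere. This immediately yields $\Lk(V)\ge 0$. For the equality case, $\Lk(V)=0$ forces $\R_\pi V(s)\mu(s)=0$ a.e.; since $\inf_{s\in\Sset}\mu(s)>0$ by assumption, $\mu$ can be divided out, giving $\R_\pi V\equiv 0$ a.e., i.e.\ $\B_\pi V=V$ almost everywhere. The classical Banach fixed-point argument for the $\gamma$-contraction $\B_\pi$ then identifies the only such $V$ as $V^\pi$. The converse direction is immediate: if $V=V^\pi$, then $\R_\pi V\equiv 0$, so the integrand in \eqref{equ:ourloss} vanishes pointwise and $\Lk(V)=0$.

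The main (and essentially only) technical obstacle is checking that the auxiliary function $g=(\R_\pi V)\mu$ sits in the class of $L_2$-integrable functions to which the ISPD property applies. Under the standard background assumptions of bounded reward and a value-function class with bounded range, $\R_\pi V$ is bounded, $\mu$ is a density, and boundedness of $\k$ on the support ensures finiteness of the integral; so this step reduces to a routine integrability check. A secondary subtlety is moving from "a.e." to the functional identification $V=V^\pi$: because $\inf_s\mu(s)>0$ implies $\mu$ has full support, a.e.\ equality is enough to identify $V$ with $V^\pi$ in any class where the Bellman fixed point is unique (and, under the mild continuity assumptions typical of parametric families $\{V_\theta\}$, this upgrades to pointwise equality).
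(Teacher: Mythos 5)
Your proposal is correct and follows essentially the same route as the paper's proof: rewrite $\Lk(V)=\norm{(\R_\pi V)\mu}_\k^2$, invoke the ISPD property to conclude $(\R_\pi V)\mu\equiv 0$, divide out $\mu$ using $\inf_s\mu(s)>0$, and identify $V=V^\pi$ via uniqueness of the Bellman fixed point. The only difference is that you make explicit the integrability and almost-everywhere caveats that the paper leaves implicit.
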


The next result relates the kernel loss to a ``dual'' kernel norm of the value function error, $V-V^\pi$.  
\begin{thm}\label{thm:Vfunction}
Under the same assumptions as \thmref{thm:validate}, we have $\Lk(V) =  \norm{V - V^\pi}_{\k^*,\mu}^2$, where $\norm{\cdot}_{\k^*,\mu}$ is the $\k^*$-norm under measure $\mu$ with a ``dual'' kernel $\k^*(s,\bar s)$, defined by 
$$
\k^*(s', \bar s') \defeq 
\E_{s,\sbar~\sim~ d^*_{\pi,
\mu}}\Big [\k(s', \bar s') + \gamma^2 \k(s, \bar s) - \gamma \big( \k(s', \bar s) + \k(s, \bar s') \big) ~|~ s', \bar s' \Big]\,,
$$
and the expectation notation is shorthand for
$
\E_{s\sim d^*_{\pi,
\mu}}[f(s)|s'] = \int f(s)  d^*_{\pi,\mu}(s|s')  ds\,,
$
with
$$d^*_{\pi,\mu}(s|s') \defeq {\sum_a \pi(a|s) P(s'|s,a)\mu(s)}/{\mu(s')}\,.
$$
\end{thm}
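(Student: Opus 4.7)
}

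The plan is to expand $\R_\pi V$ around the true value function $V^\pi$, square the resulting expression inside the kernel loss, and then convert the double expectation back to one over $\mu \otimes \mu$ using the ``time-reversal'' identity $\mu(s)\, P_\pi(s'\,|\,s) = \mu(s')\, d^*_{\pi,\mu}(s\,|\,s')$, where $P_\pi(s'|s) \defeq \sum_a \pi(a|s) P(s'|s,a)$. The identity is immediate from the definition of $d^*_{\pi,\mu}$.

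First, I would use the fact that $\B_\pi V^\pi = V^\pi$ and the linearity of the Bellman operator's ``next-state'' piece to write, for $\Delta \defeq V - V^\pi$,
\[
\R_\pi V(s) \;=\; (\B_\pi V - V)(s) - (\B_\pi V^\pi - V^\pi)(s) \;=\; -\Delta(s) + \gamma\, \E_{s' \sim P_\pi(\cdot|s)}[\Delta(s')]\,.
\]
Plugging this into the definition of $\Lk(V)$ and pulling the conditional expectations out, I obtain a single expectation over $(s,\bar s) \sim \mu \otimes \mu$ together with $s' \sim P_\pi(\cdot|s)$ and $\bar s' \sim P_\pi(\cdot|\bar s)$, of the product
\[
\k(s,\bar s)\,\bigl(-\Delta(s) + \gamma \Delta(s')\bigr)\bigl(-\Delta(\bar s) + \gamma \Delta(\bar s')\bigr)\,.
\]
Expanding this gives four terms: $\k(s,\bar s)\Delta(s)\Delta(\bar s)$, $\gamma^2 \k(s,\bar s)\Delta(s')\Delta(\bar s')$, $-\gamma\k(s,\bar s)\Delta(s)\Delta(\bar s')$, and $-\gamma\k(s,\bar s)\Delta(s')\Delta(\bar s)$.

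The heart of the argument is then a change-of-variable step applied term-by-term. For each term I would rename whichever pair of variables carries the $\Delta$'s to $(s',\bar s')$, and use the identity above to absorb the transition kernel into $d^*_{\pi,\mu}$ while leaving $\mu(s')\mu(\bar s')$ as the outer measure. Concretely, the first term is already in the desired form with integrand $\k(s',\bar s')\Delta(s')\Delta(\bar s')$; the last term produces $\gamma^2\, \E_{s\sim d^*(\cdot|s'),\,\bar s\sim d^*(\cdot|\bar s')}[\k(s,\bar s)]\,\Delta(s')\Delta(\bar s')$; and the two cross terms produce $-\gamma\,\E_{\bar s\sim d^*(\cdot|\bar s')}[\k(s',\bar s)]$ and $-\gamma\,\E_{s\sim d^*(\cdot|s')}[\k(s,\bar s')]$ respectively, each multiplied by $\Delta(s')\Delta(\bar s')$. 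Summing the four contributions recovers exactly the definition of $\k^*(s',\bar s')$ given in the statement, so $\Lk(V) = \E_{s',\bar s'\sim\mu}[\k^*(s',\bar s')\Delta(s')\Delta(\bar s')] = \norm{V - V^\pi}_{\k^*,\mu}^2$.

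The main obstacle is not conceptual but notational: four terms, each needing a different renaming of the four ``dummy'' variables $(s,\bar s,s',\bar s')$, combined with an asymmetric use of the $\mu\cdot P_\pi = \mu\cdot d^*$ identity. A careful table matching which of $\{s,\bar s,s',\bar s'\}$ carries the $\Delta$'s in each term, together with a consistent convention that the outer expectation is always over the ``$\Delta$-bearing'' pair drawn from $\mu$, should make the bookkeeping routine. No additional ISPD or positivity assumptions are used in this theorem beyond those already invoked in Theorem~\ref{thm:validate}; positivity would only be needed if one wanted to conclude that $\k^*$ itself is positive definite, which is not asserted here.
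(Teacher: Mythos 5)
Your plan is correct and is essentially the paper's own proof: using $\R_\pi V^\pi = 0$ to write $\R_\pi V(s) = \gamma\,\E_{s'}[\Delta(s')\mid s] - \Delta(s)$ with $\Delta = V - V^\pi$, expanding the resulting quadratic inside $\Lk$ into four terms, and folding the forward transition kernel into the backward conditional via $\mu(s)\sum_a \pi(a|s)P(s'|s,a) = \mu(s')\,d^*_{\pi,\mu}(s|s')$. The only difference is presentational --- you perform the change of variables term by term, while the paper writes a single joint expectation over $d_{\pi,\mu}(s,s')$ and then reads off $\k^*$ --- so there is nothing further to check.
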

  The norm involves a quantity, $d^*_{\pi,\mu}(s|s')$, which may be heuristically viewed as a ``backward'' conditional probability of state $s$ conditioning on observing the next state $s'$ (but note that $ d^*_{\pi,\mu}(s|s')$ is not normalized to sum to one unless $\mu=d_\pi$).  %

\iffalse 
\begin{proof}
Define $\delta  = V - V^\pi$.
Note that $\R_\pi V^\pi = V^\pi - \B_\pi V^\pi =  0$ by Bellman equation, we have 
$$
\R_\pi V =
\R_\pi V - \R_\pi V^\pi  
= \R_\pi (V - V^\pi) = \R_\pi( \delta ).
$$
Therefore, 
\begin{align*}
    \Lk(V) 
    & = \Lk(\delta) \\
    & =  \E_{\mu} 
    [\R_\pi\delta(s) \R_\pi\delta(\bar s) \k(s,\bar s) ] \\
    &= 
    \E_{\mu}  g
    [(\delta(s) - \gamma \delta(s')) (\delta(\bar s) - \gamma \delta(\bar s')) \k(s,\bar s) ]. 
\end{align*}
\end{proof}
\fi 

\paragraph{Empirical Estimation}
The key advantage of the new loss $\Lk$ is that it can be easily estimated and optimized from observed transitions, without requiring double samples.  %
Given a set of empirical data $\D = \{(s_i,a_i,r_i,s_i')\}_{1\le i \le n}$, %
a way to estimate $\Lk$ is to use the so-called \emph{V-statistics}, 
\begin{equation} \label{equ:vstat} 
\Lkhat(V_\theta) 
\defeq \frac{1}{n^2}
\sum_{1 \le i, j \le n}
\k(s_i, s_j) \cdot \hat\R_\pi V_\theta(s_i) \cdot \hat\R_\pi V_\theta (s_j)\,.
\end{equation}
Similarly, the gradient 
$\nabla_\theta \Lk(V_\theta)  = 2\E_{\mu}[\k(s,\bar s) 
\R_\pi V_\theta(s)  \nabla_\theta (\R_\pi V_\theta(\bar s))]$ can be estimated by %
$$
\nabla_\theta \Lkhat(V_\theta) \defeq \frac{2}{n^2}
\sum_{1 \le i, j \le n}  
\k(s_i, s_j) \cdot \hat\R_\pi V_\theta(s_i) \cdot\nabla_\theta \hat\R_\pi V_\theta (s_j)\,. 
$$
Note that while calculating the exact gradient requires $O(n^2)$ computation, in practice we may use stochastic gradient descent on mini-batches of data instead.
The precise formulas for unbiased estimates of the gradient of the kernel loss using a subset of samples are given in Appendix~\ref{sec:batch_lv}.

\paragraph{Remark} \hspace{-3mm} (unbiasedness) \hspace{1mm}
An alternative approach is to use the \emph{U-statistics}, 
which removes the diagonal ($i=j$) terms in the pairwise average in \eqref{equ:vstat}. 
In the case of i.i.d. samples, it is known that U-statistics forms an unbiased estimate of the true gradient, but may have higher variance than the V-statistics. 
In our experiments, we observe that V-statistics works better than U-statistics.

\paragraph{Remark} \hspace{-3mm} (consistency) \hspace{1mm}
Following standard statistical approximation theory \citep[e.g.,][]{serfling2009approximation}, 
both U/V-statistics provide \emph{consistent} estimation of the expected quadratic quantity given 
the sample is weakly dependent and satisfies certain mixing condition \citep[e.g.,][]{denker1983u, beutner2012deriving};  %
this often amounts to saying that $\{s_i\}$ forms a Markov chain that converges to its stationary distribution $\mu$ sufficiently fast. 
This is in contrast to the gradient computed by residual gradient, which is known to be inconsistent in general.

\iffalse
It yields an estimation of the gradient of the exact kernel loss $\nabla_\theta \Lk(V_\theta)  = 2\E_{\mu}[\k(s,\bar s) 
\R_\pi V_\theta(s)  \nabla_\theta (\R_\pi V_\theta(\bar s))]. 
$

However, $\Lkhat(V_\theta)$ is \emph{biased}, due to the diagonal terms (pairs with $i=j$ above).  In contrast, U-statistics eliminate this bias by removing those diagonal terms:
$$
\Lktilde(V_\theta) \defeq \frac{1}{n(n-1)}\sum_{i\neq j}  
\k(s_i,s_j) \hat\R_\pi V_\theta(s_i) \hat\R_\pi V_\theta(s_j). 
$$
Since the contribution of the diagonal terms is small for large $n$, U- and V- statistics share similar asymptotic behavior.  Empirically, V-statistics are  observed to work better for our purpose,  
mainly because the diagonal terms ensure that $\Lkhat(\theta) \geq 0$.  In contrast, the U-statistics version, $\Lktilde$, could be arbitrarily negative, which tends to be  ``over-optimized'' when used as a  learning objective.
\lihong{Need some rewording.  Maybe introduce U before V?}
\fi

\paragraph{Remark} 
Another advantage of our kernel loss is that we have $\Lk(V) = 0$ iff $V = V^\pi$. Therefore, the magnitude of the empirical loss $\Lkhat(V)$ reflects the closeness of $V$ to the true value function $V^\pi$. 
In fact, by using methods from kernel-based hypothesis testing~\citep[e.g.,][]{gretton2012kernel,liu2016kernelized,chwialkowski2016kernel}, one can design statistically calibrated methods to test if $V = V^\pi$ has been achieved, which may be useful for designing efficient exploration strategies.  In this work, we focus on estimating $V^\pi$ and leave it as future work to test value function proximity.

\subsection{Interpretations of the Kernel Loss}

We now provide some insights into the new loss function, based on two interpretations.

\paragraph{Eigenfunction Interpretation}
Mercer's theorem implies the following decomposition
 \begin{align}\label{equ:mercer}
     \k(s, \bar s) = \sum_{i =1 }^\infty \lambda_i e_i(s) e_i(\bar s)\,, 
 \end{align}
of any continuous positive definite kernel on a compact domain, where $\{e_i\}_{i =1}^\infty$ is a countable set of orthonormal eigenfunctions w.r.t. $\mu$ (i.e., $\E_{s\sim \mu}[e_i(s) e_j(s)] = \mathbf{1}\{i=j\}$),
and $\{\lambda_i\}_{i=1}^\infty$ are their eigenvalues.  
For ISPD kernels, all the eigenvalues must be positive: $\forall i,~\lambda_i > 0$.

The following shows that $\Lk$ is a squared projected Bellman error in the space 
spanned by $\{e_i\}_{i=1}^\infty$.
\begin{pro}\label{thm:eigen}
If \eqref{equ:mercer} holds, then
$$
\Lk(V) = \sum_{i=1}^\infty 
\lambda_i  \left ( \E_{s\sim \mu} \left [ \R_\pi V(s) \times  e_i(s) \right ] \right)^2\,.
$$%
Moreover, if $\{e_i\}$ is a complete orthonormal basis of $L_2$-space under measure $\mu$, then the $L_2$ loss is
$$
L_2(V) = \sum_{i=1}^\infty   \left ( \E_{s\sim \mu} \left [ \R_\pi V(s) \times  e_i(s) \right ] \right)^2\,. %
$$
Therefore, $\Lk(V)\leq \lambda_{\max} L_2(V)$, where $\lambda_{\max} \defeq \max_i \{\lambda_i\}$. 
\end{pro}

This result shows that the eigenvalue $\lambda_i$ controls the contribution of the projected Bellman error to the eigenfunction $e_i$ in $\Lk$.  It may be tempting to have $\lambda_i \equiv 1$, in which $\Lk(V)=L_2(V)$, 
but the Mercer expansion in \eqref{equ:mercer} can diverge to infinity, resulting in an ill-defined kernel $K(s,\bar s)$.  %
To avoid this, the eigenvalues must decay to zero fast enough such that $\sum_{i=1}^\infty\lambda_i <\infty$. 
Therefore, the kernel loss $\Lk(V)$ can be viewed as prioritizing the 
 projections to the eigenfunctions with larger eigenvalues. %
In typical kernels such as Gaussian RBF kernels, these dominant eigenfunctions 
are Fourier bases with low frequencies (and hence high smoothness), which may intuitively be more relevant than the  higher frequency bases for practical purposes. 

\paragraph{RKHS Interpretation} The squared Bellman error has the following variational form:
\begin{equation}
L_2(V)
= \max_{f} 
\left \{ \left (\E_{s\sim \mu} \left [ \R_\pi V(s) \times f(s) \right ] \right )^2 
\colon ~~~~~~
\E_{s\sim \mu }[({f(s)})^2] \leq 1 
\right \}
\,,
\label{equ:L2constr}
\end{equation} 
which involves finding a function 
$f$ in the unit $L_2$-ball, whose inner product with $\R_\pi V(s)$ is maximal.
Our kernel loss has a similar interpretation, with a different unit ball.

Any positive kernel $K(s,\bar s)$ is associated with a Reproducing Kernel Hilbert Space (RKHS) $\H_K$, which is the Hilbert space consisting of (the closure of) the linear span of $K(\cdot, s)$, for $s\in \mathcal S$, and satisfies the reproducing property, $f(x) = \langle f, ~ K(\cdot, x)\rangle_{\H_K}$, for any $f\in \H_K$.
RKHS has been widely used as a powerful tool in various machine learning and statistical problems; see \citet{berlinet2011reproducing, muandet2017kernel} for overviews. 

\begin{pro}\label{thm:rkhs}
Let $\H_\k$ be the RKHS of kernel $\k(s,\bar s)$, we have
\begin{equation}
\Lk(V) = \max_{f \in \H_\k} 
 \left \{ \left (\E_{s\sim \mu} \left [ \R_\pi V(s) \times f(s) \right ] \right )^2
 \colon ~~ \norm{f}_{\H_\k} \le 1
 \right \}.
\end{equation}
\end{pro}

Since RKHS is a subset of the $L_2$ space that includes smooth functions, we can again see that $\Lk(V)$ emphasizes more the projections to smooth basis functions, matching the intuitive from Theorem~\ref{thm:eigen}. 
It also draws a connection to the recent primal-dual reformulations of the Bellman equation~\citep{dai2017learning,dai18sbeed}, which formulate 
$V^\pi$ as a saddle-point of the following minimax problem:
\begin{align} \label{equ:sbeed}
\min_{V} \max_{f} %
~~ \E_{s\sim \mu} \left [ 2 \R_\pi V(s) \times f(s) -  f(s)^2 \right ].
\end{align}
This is equivalent to minimizing $L_2(V)$ as \eqref{equ:L2constr}, except that the L2 constraint is replaced by a quadratic penalty term. 
When only samples are available, the expectation in 
\eqref{equ:sbeed} is replaced by the empirical version. If the optimization domain of $f$ is \emph{unconstrained}, solving the empirical \eqref{equ:sbeed} reduces to the empirical L2 loss \eqref{equ:rg}, which yields inconsistent estimation.
Therefore, 
existing works propose to  
further constrain the optimization of $f$ in \eqref{equ:sbeed} to 
either RKHS~\citep{dai2017learning} or neural networks~\citep{dai18sbeed}, and hence derive a minimax strategy for learning $V$. 
Unfortunately, this is substantially more expensive than our method due to the cost of updating another neural network $f$ jointly;  
the minimax procedure may also make the training less stable and more difficult to converge in practice.

\iffalse 
\paragraph{Bounding Error of Value Function}
We now provide another intepretation, showing that our kernel loss can be viewed as a special kernel norm of the difference $(V -V^\pi)$ between $V$ and the true value $V^\pi$. 
\begin{thm}
Define a dual kernel 
\begin{align*} 
\k^*(s, \bar s) = 
\E_{d_\pi}\big [ & \k(s, \bar s) 
- \gamma ( \k(s', \bar s) + \k(s, \bar s') ) \\
&+ \gamma^2 \k(s', \bar s') ~|~ s, \bar s \big ], 
\end{align*}
where the expectation $ \E_{d_\pi}$ is w.r.t. the transition probability under policy $\pi$: 
$$
d_\pi(s'|s) = \int P(s'|s,a) \pi(a|s) da. 
$$
We then have 
$$
\Lk(V) =  \Lk( V - V^\pi) = 
\norm{V - V^\pi}_{\k^*}, 
$$
where the $\k^*$ norm is defined in \eqref{equ:kernel}, that is, $\norm{f}_{\k^*} = \int \k^*(s,\bar s)f(s) f(\bar s)~ds~d\sbar$. 
\end{thm}
\begin{proof}
Define $\delta  = V - V^\pi$.
Note that $\R_\pi V^\pi = V^\pi - \B_\pi V^\pi =  0$ by Bellman equation, we have 
$$
\R_\pi V =
\R_\pi V - \R_\pi V^\pi  
= \R_\pi (V - V^\pi) = \R_\pi( \delta ).
$$
Therefore, 
\begin{align*}
    \Lk(V) 
    & = \Lk(\delta V) \\
    & =  \E_{d_\pi} 
    [\R_\pi\delta(s) \R_\pi\delta(\bar s) \k(s,\bar s) ] \\
    &= 
    \E_{d_\pi} 
    [(\delta(s) - \gamma \delta(s')) (\delta(\bar s) - \gamma \delta(\bar s')) \k(s,\bar s) ]. 
\end{align*}
\end{proof}
\fi 

\subsection{Connection to Temporal Difference (TD) Methods}
\label{sec:special_cases}

We now instantiate our algorithm in the \emph{tabular} and \emph{linear} cases to gain further insights.  Interestingly, we show that our loss coincides with previous work, and as a result leads to the same value function as several classic algorithms.  Hence, the approach developed here may be considered as their strict extensions to the much more general nonlinear function approximation classes.

Again, let $\Dset$ be a set of $n$ transitions sampled from distribution $\mu$,
and linear approximation be used:  %
$V_\theta(s) = \theta^\mt \phi(s)$, 
where $\phi:S \to \Rset^d$ is a feature function, and $\theta \in \Rset^d$ is the parameter to be learned. 
The TD solution, $\thetaTD$, for either on- and off-policy cases, can be found by various algorithms~\citep[e.g.,][]{sutton88learning,boyan99least,sutton09fast,dann14policy}, and its theoretical properties have been extensively studied~\citep[e.g.,][]{tsitsiklis97analysis,lazaric12finite}.

\begin{cor} \label{cor:same_as_td}
When using a linear kernel of form 
$k(s,\bar{s})=\phi(s)^\mt\phi(\bar{s})$,  minimizing the kernel objective~\eqref{equ:vstat} gives the TD solution $\thetaTD$.
\end{cor}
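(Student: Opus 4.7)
The plan is to substitute the linear kernel into the empirical V-statistic \eqref{equ:vstat}, show that it factors as a squared Euclidean norm, and then recognize the inner vector as precisely the TD residual vector whose zero is $\thetaTD$.

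Concretely, I would first plug $\k(s_i,s_j)=\phi(s_i)^\mt\phi(s_j)$ into \eqref{equ:vstat}. Because the kernel factorizes, the double sum separates, giving
\[
\Lkhat(V_\theta)=\frac{1}{n^2}\Big\|\,\sum_{i=1}^{n}\phi(s_i)\,\hat\R_\pi V_\theta(s_i)\,\Big\|_2^2.
\]
Next I would use the linear parameterization $V_\theta(s)=\theta^\mt\phi(s)$ to expand the empirical Bellman residual as
\[
\hat\R_\pi V_\theta(s_i)=r_i-\theta^\mt\big(\phi(s_i)-\gamma\phi(s_i')\big),
\]
so that the inner vector becomes $g(\theta)\defeq b-A\theta$ with $A\defeq\sum_i\phi(s_i)(\phi(s_i)-\gamma\phi(s_i'))^\mt$ and $b\defeq\sum_i\phi(s_i)r_i$. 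Hence $\Lkhat(V_\theta)=n^{-2}\|b-A\theta\|_2^2\ge 0$.

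The final step is to recall that the TD solution is, by definition, the parameter satisfying the normal equation $A\theta=b$, i.e.\ $\thetaTD=A^{-1}b$ under the standard non-singularity assumption used to define TD in both the on- and off-policy settings. At $\theta=\thetaTD$ we have $g(\thetaTD)=0$, which achieves the global minimum value $0$ of the non-negative quadratic $\|b-A\theta\|_2^2$. When $A$ is invertible this minimizer is unique; otherwise any solution of $A\theta=b$ is a minimizer and coincides with a valid TD fixed point. This establishes the claim.

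The main obstacle is essentially bookkeeping rather than a conceptual difficulty: one must be careful that the factorization step really produces a single outer product $\sum_i\phi(s_i)\hat\R_\pi V_\theta(s_i)$ rather than two different sums (which it does, since $\hat\R_\pi V_\theta(s_i)$ is scalar), and one should state clearly the standard assumption that the TD matrix $A$ is non-singular so that ``the TD solution'' is well defined and agrees with the kernel-loss minimizer. No subtle stochastic or kernel-theoretic argument is needed here, because the linear kernel collapses the pairwise structure into a rank-one outer product.
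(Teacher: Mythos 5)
Your proof is correct and follows essentially the same route as the paper's: both reduce the linear-kernel V-statistic to the squared Euclidean norm of the expected-TD-update vector $b-A\theta$ (the paper writes this as $\frac{1}{n^2}(r-Z\theta)^\mt XX^\mt(r-Z\theta)$ with $A=X^\mt Z$, $b=X^\mt r$) and identify its minimizer with the TD fixed point under the standard non-singularity assumption. Your final step is marginally more direct — observing that the non-negative quadratic attains its global minimum of zero at $\thetaTD$ — whereas the paper sets the gradient to zero and algebraically verifies that the resulting normal-equation solution equals $(X^\mt Z)^\mi X^\mt r$, but the substance is identical.
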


\begin{remark}
The result follows from the observation that our loss becomes the Norm of the Expected TD Update (NEU) in the linear case~\citep{dann14policy}, whose minimizer coincides with $\thetaTD$.  Moreover, in finite-state MDPs, the corollary includes tabular TD as a special case, by using a one-hot vector (indicator basis) to represent states.
In this case, the TD solution coincides with that of a model-based approach~\citep{Parr08Analysis} known as \emph{certainty equivalence}~\citep{kumar86stochastic}.
It follows that our algorithm includes certainty equivalence as a special case in finite-state problems.
\end{remark}

\iffalse
\begin{remark}
In drawing connections above, we have used the V-statistics estimate of the loss function.  The unbiased U-statistics counterparts result in new algorithms, to the best of our knowledge.
\end{remark}
\fi

\section{Kernel Loss for Policy Optimization}

There are different ways to extend our approach to policy optimization.  One is to use the kernel loss \eqref{equ:ourloss} inside an existing algorithm, as an alternative to RG or TD to learn $V^\pi(s)$.  For example, our loss fits naturally into an actor-critic algorithm, 
where we replace the critic update (often implemented by TD$(\lambda)$ or its variant) with our method, and the actor updating part remains unchanged. 
Another, more general way is to design a kernelized loss for $V(s)$ and policy $\pi(a|s)$ jointly, so that the policy optimization can be solved using a single optimization procedure.  
Here, we take the first approach, leveraging our method to improve the critic update step in Trust-PCL~\citep{nachum2018trustpcl}. 

Trust-PCL is based on a temporal/path consistency condition resulting from policy smoothing~\citep{nachum17bridging}.
We start with the smoothed Bellman operator,  defined by
\begin{align} \label{equ:bvsopt}
\B_\lambda V(s) = \max_{\pi(\cdot|s)\in\Pset_\Aset} \E_\pi [ R(s,a) + \gamma V(s') + \lambda H(\pi~|~s)~|~ s]\,,
\end{align}
where $\Pset_\Aset$ is the set of distributions over action space $\Aset$; the conditional expectation $\E_\pi[\cdot|s]$ denotes $a\sim \pi(\cdot|s)$, and $\lambda>0$ is a smoothing parameter; $H$ is a state-dependent entropy term: $H(\pi~|~s) \defeq -\sum_{a\in\Aset} \pi(a|s)  \log\pi(a|s)$.  Intuitively, $\B_\lambda$ is a smoothed approximation of $\B$.
It is known that $\B_\lambda$ is a $\gamma$-contraction~\citep{fox15taming}, so has a unique fixed point $V^*_\lambda$.  Furthermore, with $\lambda=0$ we recover the standard Bellman operator, and $\lambda$ smoothly controls $\|V^*_\lambda-V^*\|_\infty$~\citep{dai18sbeed}.

The entropy regularization above implies the following path consistency condition.  Let $\pi^*_\lambda$ be an optimal policy in \eqref{equ:bvsopt} for $\B_\lambda$, which yields $V^*_\lambda$.  Then, $(V,\pi)=(V^*_\lambda,\pi^*_\lambda)$ uniquely solves
\[
\forall (s,a)\in\Sset\times\Aset: \quad V(s) = R(s,a) + \gamma \E_{s'|s,a}[V(s')] - \lambda\log\pi(a|s)\,.
\]

This property inspires a natural extension of the kernel loss \eqref{equ:ourloss} to the controlled case:
\[
\Lk(V) = \E_{s,\sbar\sim\mu,a\sim\pi(\cdot|s),\bar{a}\sim\pi(\cdot|\bar{s})} [K([s, a], [\bar{s}, \bar{a}]) \cdot \R_{\pi,\lambda} V(s,a) \cdot \R_{\pi,\lambda} V(\sbar,\abar)]\,,
\]
where $\R_{\pi,\lambda} V(s,a)$ is given by
\[
\R_{\pi,\lambda} V(s,a) = R(s,a) + \gamma \E_{s'|s,a} [V(s')] - \lambda\log\pi(a|s) - V(s) \,.
\]
Given a set of transitions $\Dset=\{(s_i,a_i,r_i,s_i')\}_{1\le i\le n}$, the objective can be estimated by
\[
\Lkhat(V_\theta) = \frac{1}{n^2} \sum_{1\le i,j \le n} [\k([s_i, a_i],[s_j, a_j]) \hat{\R}_i  \hat{\R}_j]\,, 
\]
with
$$
\hat{\R}_i = r_i + \gamma V_\theta(s_i') - \lambda \log\pi_\theta(a_i|s_i) - V_\theta(s_i)\,.
$$

The U-statistics version and multi-step bootstraps can be similarly obtained~\citep{nachum17bridging}.

\iffalse

\newcommand{\tmodfdfdf}{.15}
\begin{figure*}[h!]
    \centering
    \begin{tabular}{cc}
    \includegraphics[height=\tmodfdfdf\linewidth]{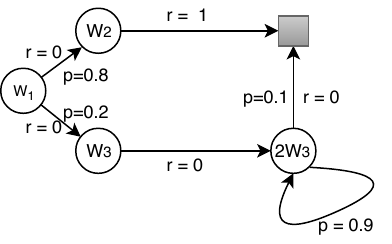}&  
         \raisebox{3em}{\rotatebox{90}{\scriptsize{MSE}}}
    \includegraphics[height=\tmodfdfdf\linewidth]{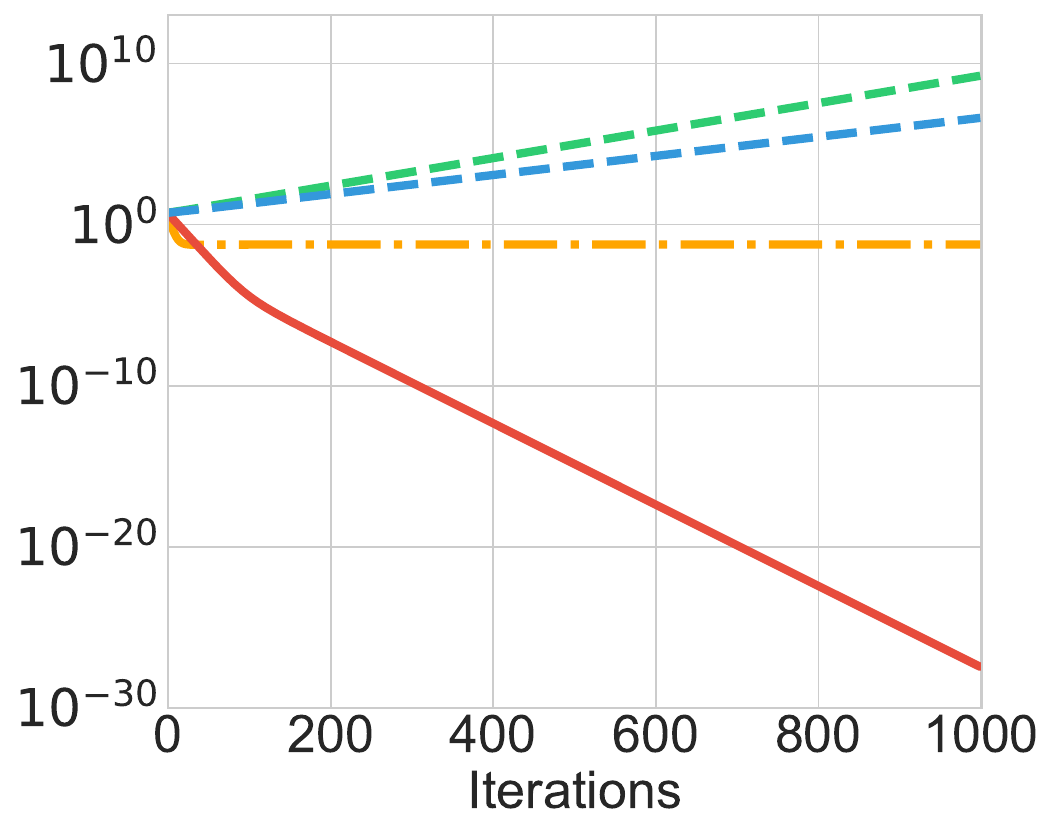} \\
    \vspace{-0em}
    \scriptsize  (a) A toy MDP example & \scriptsize (b) MSE on value function \\
    \end{tabular}

    \begin{tabular}{ccc}     
         \includegraphics[height=\tmodfdfdf\linewidth]{} &
         \includegraphics[height=\tmodfdfdf\linewidth]{} &
         \includegraphics[height=\tmodfdfdf\linewidth]{}  \hspace{-.9em}\llap{\makebox[\wd1][r]{{\raisebox{2.8em}{\includegraphics[height=4.2em]{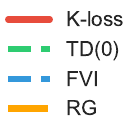}}}}} \\
       \scriptsize   (c)   $w_1$ vs iteration &\scriptsize  (d) $w_2$ vs. iteration & \scriptsize  (e) $w_3$ vs. iteration \\
    \end{tabular}
    \vspace{-1em}
    \caption{Modified toy example from \citet{tsitsiklis97analysis}.
    (b) shows the learning curve of MSE of between the true and estimated value functions by different methods. (c)-(d) shows the curve of the parameters $\{w_1, w_2, w_3\}$ being learned by different methods; the black dash line denotes the true values which we calculate exactly by solving Bellman equation. 
    }
    \label{fig:toy_diverge}
\end{figure*}

\fi

\section{Related Work}

In this work, we studied value function learning, one of the most-studied and fundamental problems in reinforcement learning.  The dominant approach is based on fixed-point iterations~\citep{bertsekas96neuro,szepesvari10algorithms,sutton18reinforcement}, which can risk instability and even divergence when function approximation is used, as discussed in the introduction.

Our approach exemplifies more recent efforts that aim to improve stability of value function learning by reformulating it as an optimization problem.  Our key innovation is the use of a kernel method to estimate the squared Bellman error, which is otherwise hard to estimate directly from samples, thus avoids the double-sample issue unaddressed by prior algorithms like residual gradient~\citep{baird95residual} and PCL~\citep{nachum17bridging,nachum2018trustpcl}.  As a result, our algorithm is \emph{consistent}: it finds the true value function with enough data, using sufficiently expressive function approximation classes.  Furthermore, the solution found by our algorithm minimizes the projected Bellman error, as in prior works when specialized to the same settings~\citep{sutton09fast,maei10toward,liu15finite,macua15distributed}.  However, our algorithm is more general: it allows us to use nonlinear value function classes and can be naturally implemented for policy optimization.  Compared to nonlinear GTD2/TDC~\citep{maei09convergent}, our method is simpler (without having to do a local linear expansion) and empirically more effective (as demonstrated in the next section).

As discussed in \secref{sec:theory}, our approach is related to the recently proposed SBEED algorithm~\citep{dai18sbeed} which shares many advantages with this work.  However, SBEED requires solving a minimax problem 
that can be rather challenging in practice.  In contrast, our algorithm only needs to solve a minimization problem, for which a wide range of powerful methods exist~\citep[e.g.,][]{bertsekas16nonlinear}.  Note that there exist other saddle-point formulations for RL, which is derived from the linear program of MDPs~\citep{wang2017primal,chen18scalable,dai18boosting}.   
The connection and comparison between these formulations would be interesting directions to investigate.

Our work is also related to a line of interesting work on Bellman residual minimization (BRM) based on nested optimization~\citep{antos08learning,farahmand08regularized,farahmand16regularized,hoffman11regularized,chen19information}.  They formulate the value function as the solution to a coupled optimization problem, where both the inner and outer optimization are over the same function space.  While their inner optimization plays a similar role as our use of RKHS in the kernel loss definition, our loss is derived from a different way, and decouples the representations used in inner and outer optimizations.  Furthermore, the nested optimization formulation also involves solving a minimax problem (similar to SBEED), while our approach is much simpler as it only requires solving a minimization problem.

\iffalse
Furthermore, the policy smoothing technique we used to extend the kernel loss to the controlled case has been found useful in prior work for various purposes~\citep[e.g.,][]{defarias00existence,todorov06linearly,fox15taming,asadi17alternative,schulman17equivalence,nachum17bridging,nachum2018trustpcl,dai18sbeed,haarnoja18soft}.
\fi

Finally, the kernel method
has been widely used in machine learning~\citep[e.g.,][]{scholkopf2001learning, muandet2017kernel}.  In RL, authors have used kernels  
either to smooth the estimates of transition probabilities and rewards~\citep{ormoneit02kernel}, or to represent the value  function~\citep[e.g.,][]{xu05kernel,xu07kernel,taylor09kernelized}.  
Our method differs from these works in that we leverage kernels
for \emph{designing proper loss functions} to address the double-sampling problem,
while putting no constraints on %
which approximation classes to represent the value function. %
Our approach is thus expected to be more flexible and scalable in practice, allowing the value function to lie in flexible function classes like neural networks.

\newcommand{\KBO}{{K-loss~}}

\section{Experiments} 
\label{sec:experiments}

We compare our method (labelled ``K-loss'' in all experiments) with several representative baselines in both classic examples and popular benchmark problems, for both policy evaluation and optimization.

\iffalse
in three different settings: 
(1) revision of the classic example of \citet{tsitsiklis97analysis} that shows divergent behavior of fixed-point methods;
(2) the puddle world problem, which is known to lead to divergent value functions when neural networks are used;
(3) other common environment for evaluating value function learning such as cartpole and mountain car; and
(4) %
more realistic control problems from Mujoco, where our method compares favorably to several baselines.
\fi

\subsection{Modified Example of \texorpdfstring{\citeauthor{tsitsiklis97analysis}}{TV}}

\figref{fig:toy_diverge}~(a) shows a modified problem of the classic example 
by \citet{tsitsiklis97analysis}, by making transitions stochastic.\footnote{Recall that the double-sample issue exists only in stochastic problems, so the modification is necessary to make the comparison to residual gradient meaningful.}  It consists of $5$ states, including $4$ nonterminal (circles) and $1$ terminal states (square), and $1$ action.  The arrows represent transitions between states.
The value function estimate is linear in the weight $\vv w = [w_1, w_2, w_3]$: for example, the leftmost and bottom-right states' values are $w_1$ and $2w_3$, respectively.  Furthermore, we set $\gamma=1$, so $V(s)$ is exact with the optimal weight $\vv w^*=[0.8, 1.0, 0]$.
In the experiment, we randomly collect $2\,000$ transition tuples for training.
We use a linear kernel in our method, so that it will find the TD solution (\corref{cor:same_as_td}).  

\figref{fig:toy_diverge}~(b\&c) show the learning curves of mean squared error ($\|V-V^*\|^2$) and weight error ($\|\pmb{w}-\vv w^*\|$) of different algorithms over iterations.
Results are consistent with theory: our method converges to the true weight $\vv w^*$, while both FVI and TD(0) diverge, and RG converges to a wrong solution.

\newcommand{\tmodfdfdf}{.28}
\begin{figure}[t]
    \centering
    \begin{tabular}{ccc} 
    \raisebox{1em}{\includegraphics[width=\tmodfdfdf\linewidth]{figures/ctoy/mdp_toy.pdf}} &  
      \includegraphics[width=\tmodfdfdf\linewidth]{figures/ctoy/ctoy_logmse.pdf} & 
      \includegraphics[width=.25\linewidth]{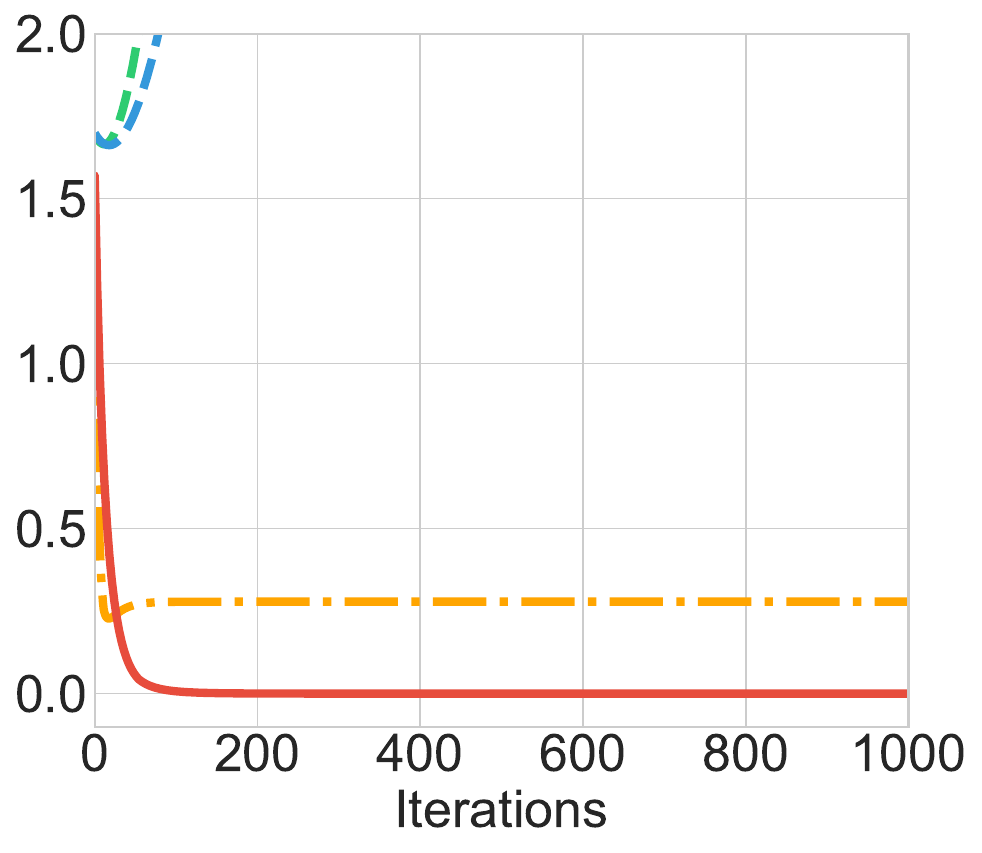}  \hspace{-1.3em}\llap{\makebox[\wd1][r]{\raisebox{4.7em}{\includegraphics[height=3.5em]{figures/ctoy/ctoy_lg.pdf}}}}
      \\  
      \small{(a) Our MDP Example} & \small{(b) MSE vs. Iteration} & (c) \small{$||\vv w - \vv w^*||$ vs. Iteration}
      \\%[1em]
    \end{tabular}
    \caption{ Modified example of \cite{tsitsiklis97analysis}.}
    \label{fig:toy_diverge}
\end{figure}

\newcommand{\tmoddd}{.25}
\newcommand{\gapline}{-1.2em}
\begin{figure}[t]
    \centering
    \begin{tabular}{cccc}
        \hspace{\gapline}
        \includegraphics[width=\tmoddd\linewidth]{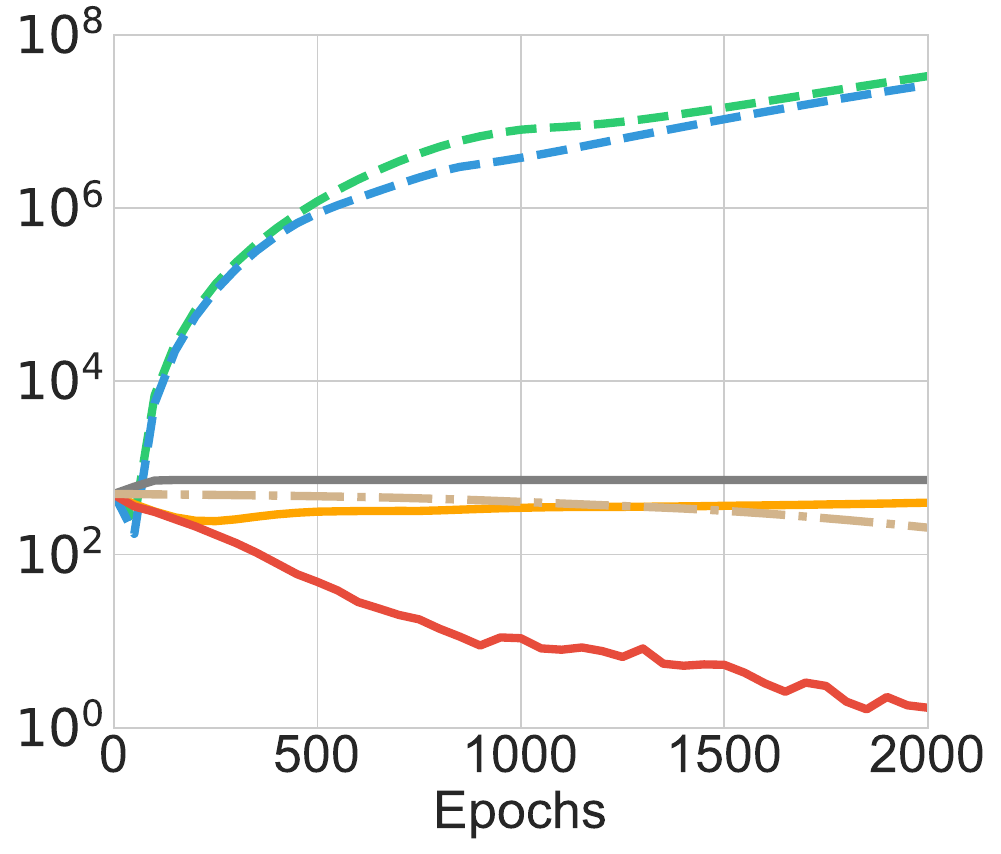}&
        \hspace{\gapline}
        \includegraphics[width=\tmoddd\linewidth]{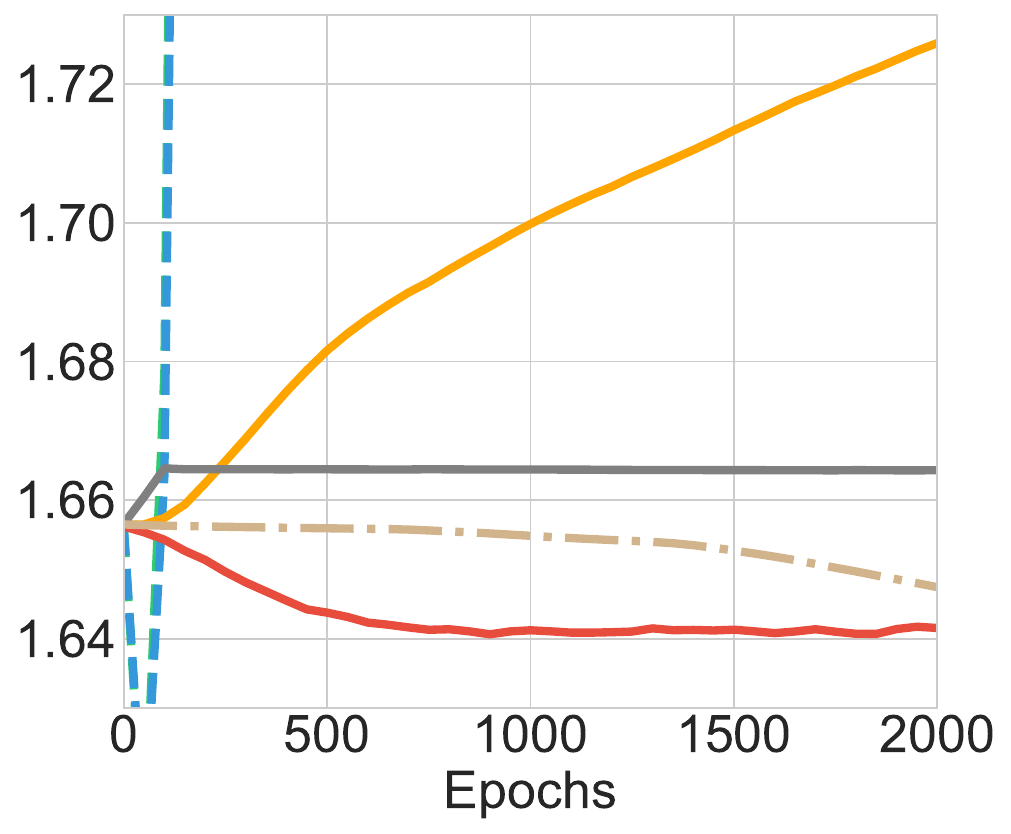}& 
        \hspace{\gapline}
        \includegraphics[width=\tmoddd\linewidth]{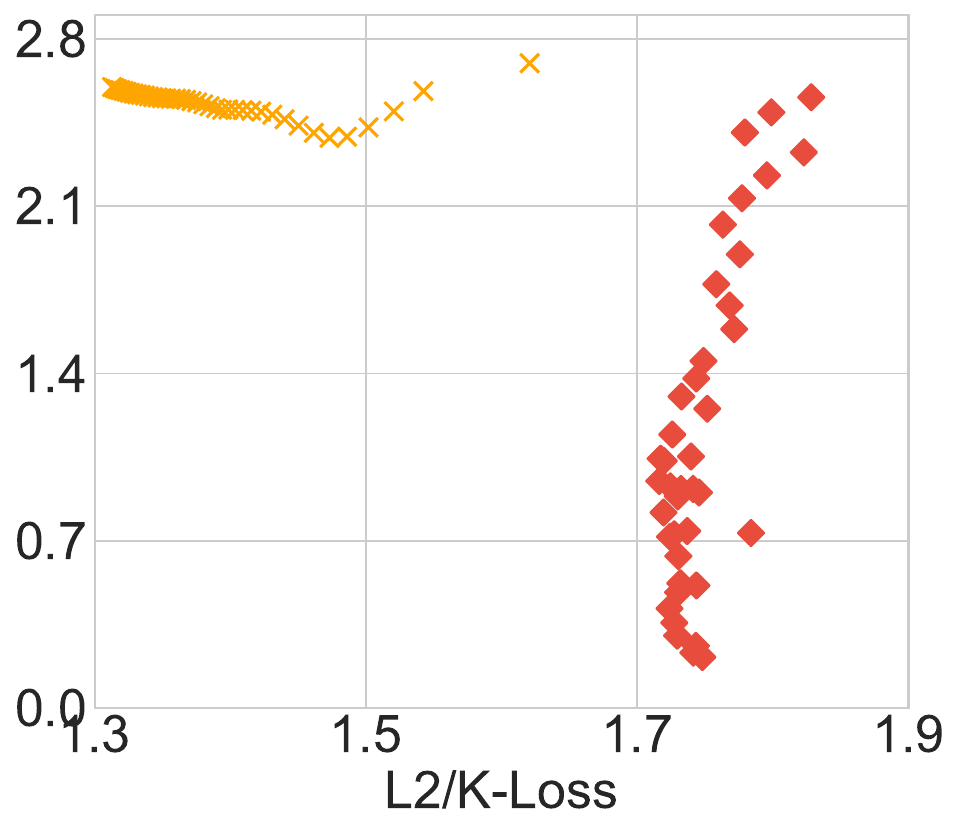} &
        \hspace{\gapline}
        \includegraphics[width=\tmoddd\linewidth]{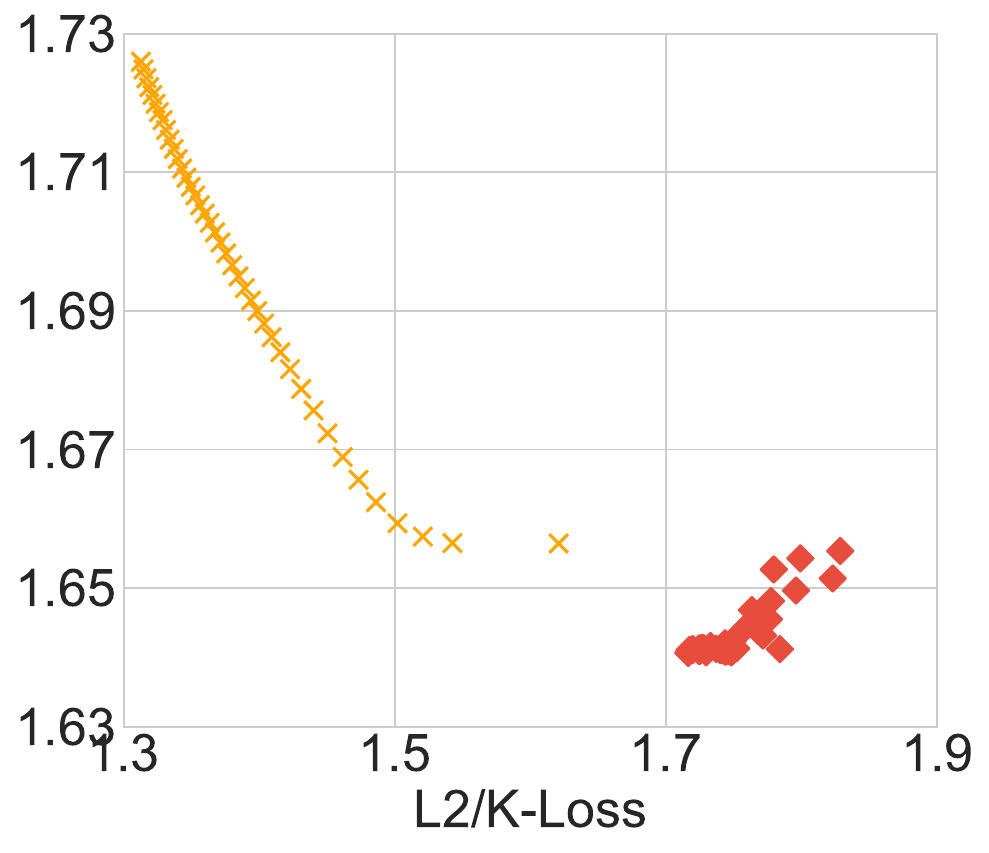} \hspace{-1.1em}\llap{\makebox[\wd1][r]{\raisebox{4.2em}{\includegraphics[height=4.0em]{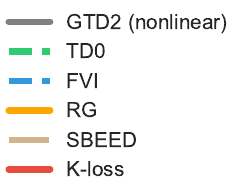}}}}
        \vspace{-.5em}
        \\
        \scriptsize{(a) MSE} &  \scriptsize{(b) Bellman Error} &  \scriptsize{(c) L2/K-Loss vs MSE} & 
        \scriptsize{(d)  L2/K-Loss vs Bellman Error} \\
    \end{tabular}
    \caption{ Results on Puddle World.}
    \vspace{-1em}
    \label{fig:pol_demo}
\end{figure}

\subsection{Policy Evaluation with Neural Networks}

While popular in recent RL literature, neural networks 
are known to be unstable for a long time.   Here, we revisit the classic divergence example of Puddle World~\citep{boyan95generalization}, and demonstrate the stability of our method.
Experimental details are found in Appendix~ \ref{sec:exp_eval}.

\figref{fig:pol_demo} summarizes the result using a neural network as value function for two metrics: $\|V-V^*\|_2^2$ and $\|\B V - V\|_2^2$, 
both evaluated on the training transitions. 
First, as shown in (a-b), our method works well while residual gradient converged to inferior solutions. 
In contrast, FVI and TD(0) exhibit unstable/oscilating behavior, and can even diverge, which is consistent with past findings~\citep{boyan95generalization}.
In addition, non-linear GTD2~\citep{maei09convergent} and SBEED~\citep{dai2017learning,dai18sbeed}, which do not find a better solution than our method in terms of MSE.

Second, \figref{fig:pol_demo} (c\&d) show the correlation between MSE, empirical Bellman error of the value function estimation and an algorithm's training objective respectively.  
Our kernel loss appears to be a good proxy for learning the value function, 
for both MSE and Bellman error.
In contrast, the L2 loss (used by residual gradient) does not correlate well, which also explains why residual gradient has been observed not to work well empirically.  %

\figref{fig:pol_bench} shows more results on value function learning on CartPole and Mountain Car,
which again demonstrate that our method performs better than other methods in general.

\newcommand{\ggapline}{-1.em}
\begin{figure}[!t]
    \centering
    \begin{tabular}{cccc}
        \hspace{\ggapline}
        \includegraphics[width=\tmoddd\linewidth]{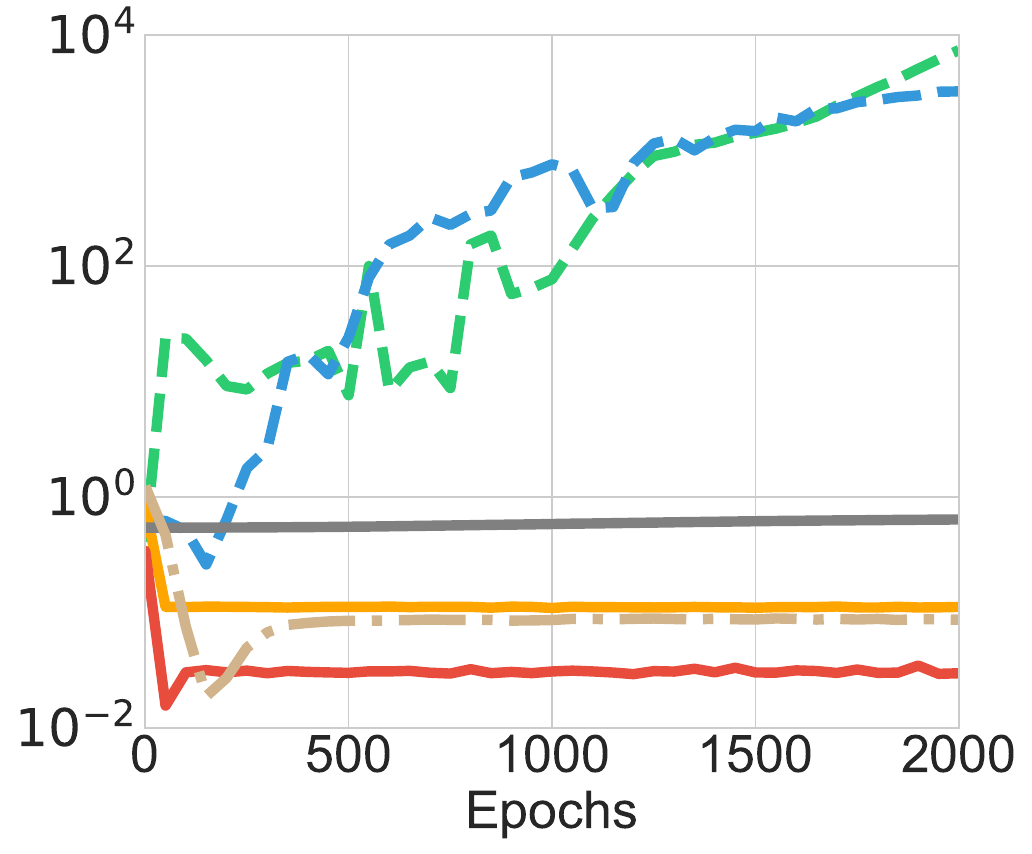}&
        \hspace{\ggapline}
        \includegraphics[width=\tmoddd\linewidth]{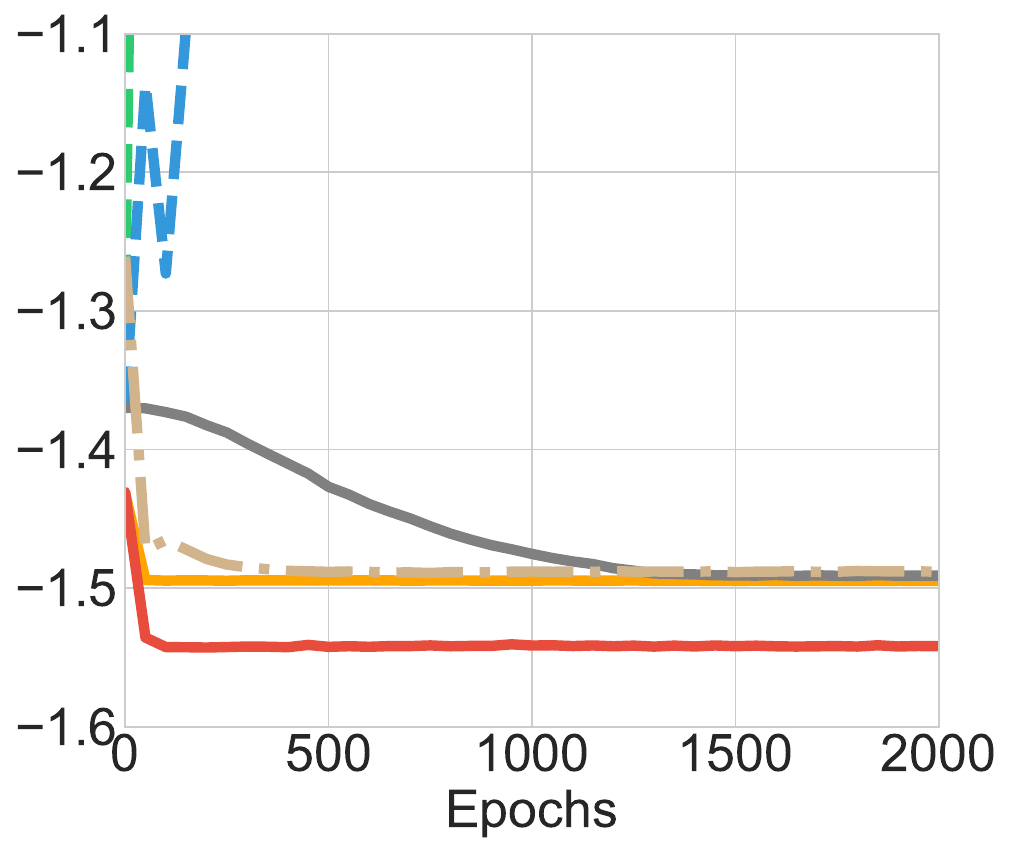}\hspace{-1.em}\llap{\makebox[\wd1][r]{\raisebox{4.8em}{\includegraphics[height=3.3em]{figures/puddle/legend.pdf}}}}& 
        \includegraphics[width=\tmoddd\linewidth]{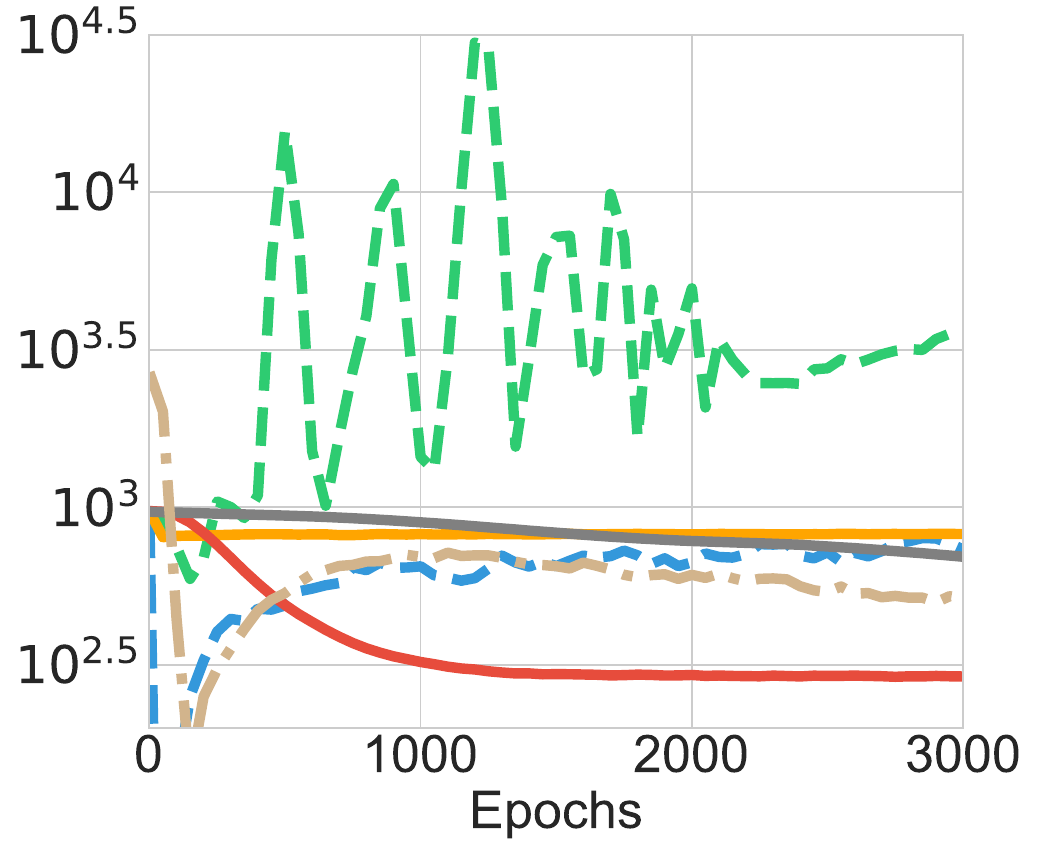}&
        \hspace{\ggapline}
        \includegraphics[width=\tmoddd\linewidth]{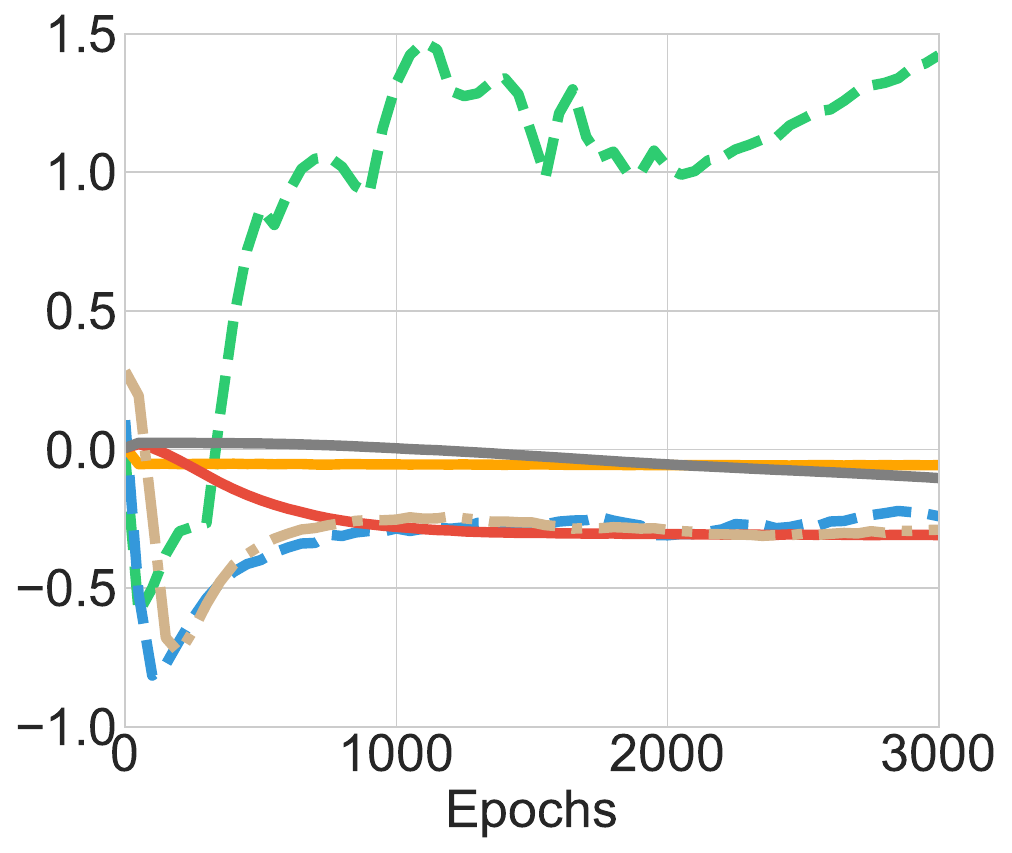} \\
        \vspace{-.2em}
        \scriptsize{(a) CartPole MSE} &  \scriptsize{(b) CartPole Bellman Error} &  \scriptsize{(c) Mountain Car MSE} & 
        \scriptsize{(d)  Mountain Car Bellman Error} \\
    \end{tabular}
    \caption{ Policy evaluation results on CartPole and Mountain Car.}
    \label{fig:pol_bench}
\end{figure}

\newcommand{\tmpleng}{.18}

\newcommand{\gapp}{-1.1em}
\begin{figure*}[!t]
    \centering
\hspace{-.04\textwidth}
    \begin{tabular}{cccc} 
        \raisebox{2em}{\rotatebox{90}{\scriptsize{Average Returns}}}
        \hspace{-.1em}
         \includegraphics[height=\tmpleng\linewidth]{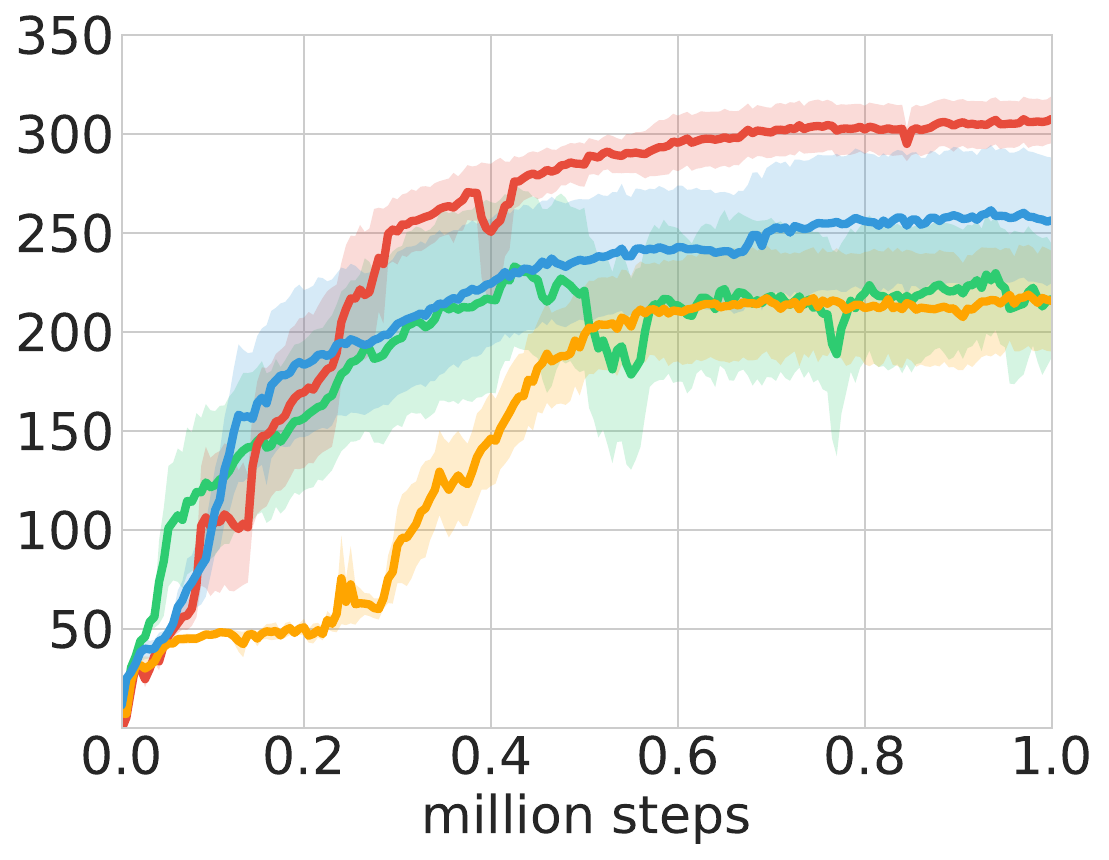}&  
         \hspace{\gapp}
          \includegraphics[height=\tmpleng\linewidth]{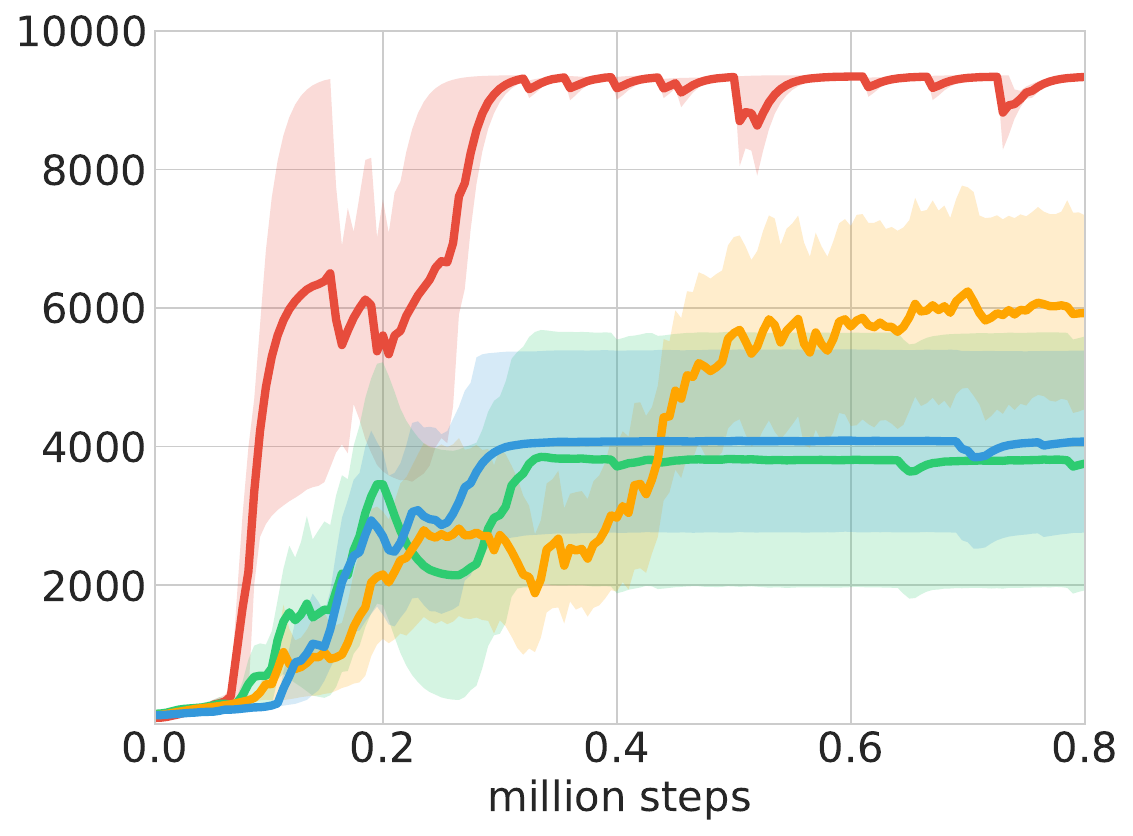}&  
          \hspace{\gapp}
          \includegraphics[height=\tmpleng\linewidth]{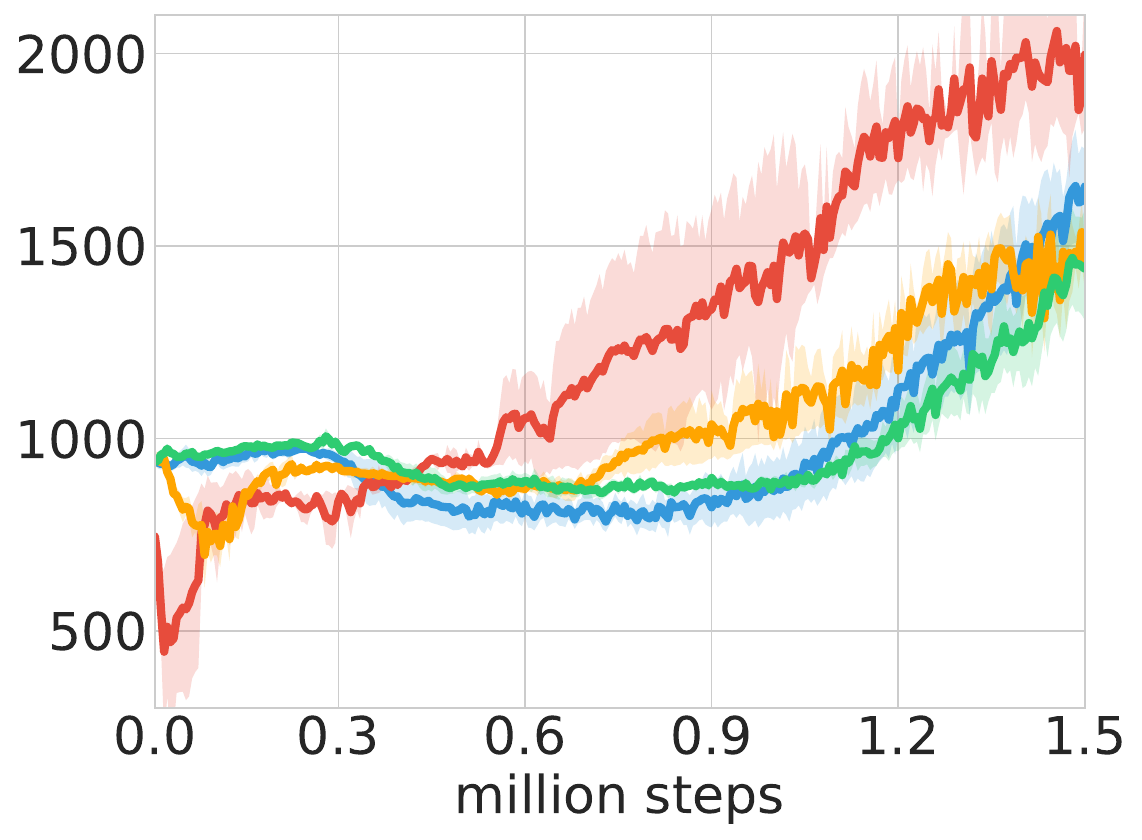} &
          \hspace{\gapp}
          \includegraphics[height=\tmpleng\linewidth]{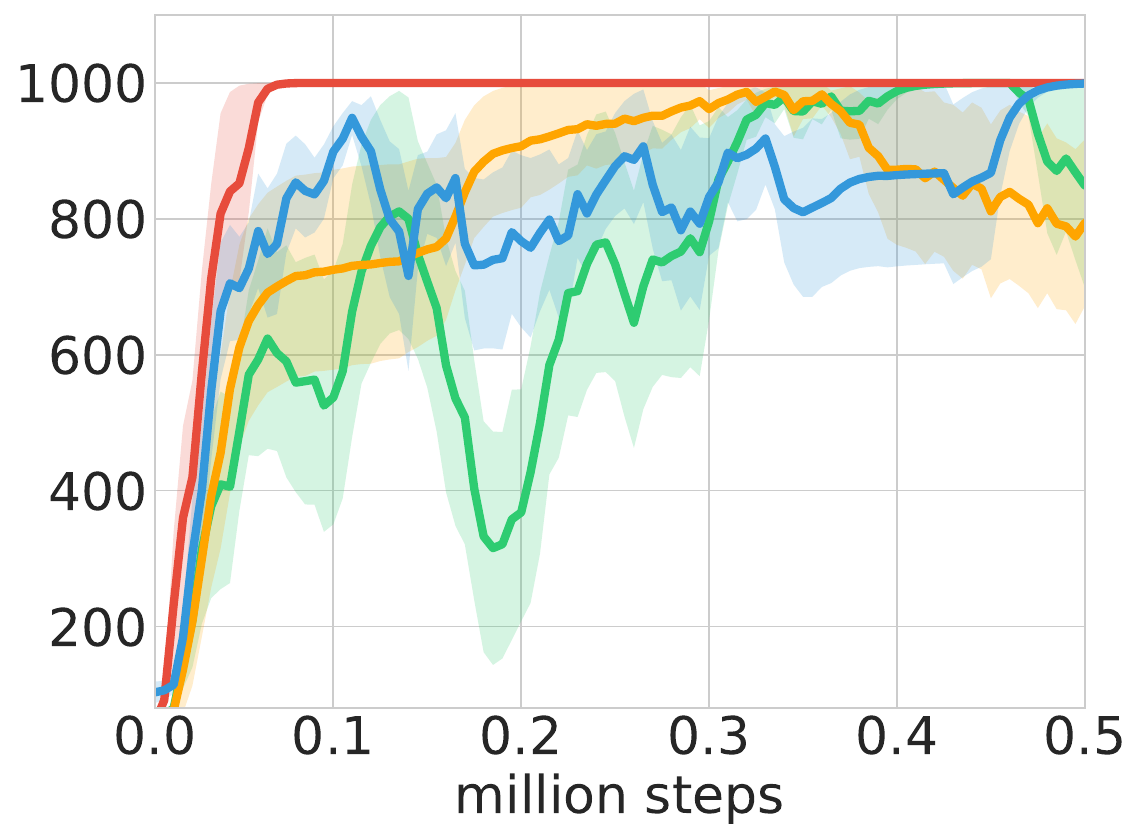}
          \hspace{-.85em}\llap{\makebox[\wd2][r]{\raisebox{1.1em}{\includegraphics[height=3.em]{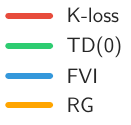}}}}
          \\
          \scriptsize{(a) Swimmer} & \scriptsize{(b) InvertedDoublePendulum} & \scriptsize{(c) Ant}  & \scriptsize{(d) InvertedPendulum}\\

    \end{tabular}
    \caption{ Results of various variants of Trust-PCL on Mujoco Benchmark.}
    \label{fig:mujoco_bench}
\end{figure*}

\subsection{Policy Optimization}
To demonstrate the use of our method in policy optimization, 
we combine it with Trust-PCL, 
and compare with variants of Trust-PCL combined with FVI, TD(0) and RG. 
To fairly evaluate the performance of all  these four methods, we use Trust-PCL  \citep{nachum2018trustpcl} framework and the public code for our experiments.  
We only modify the training of  $V_\theta(s)$ for each of the method and keep rest same as original release. Experimental details 
can be found in Appendix~\ref{sec:net_arc}.

We evaluate the performance of these four methods on Mujoco benchmark and report the best performance of these four methods in Figure \ref{fig:mujoco_bench} (averaged on five different random seeds). 
\KBO consistently outperforms all the other methods, learning better policy with fewer data.
Note that we only modify the update of value functions inside Trust-PCL, which can be implemented relatively easily. 
We expect that we can improve many other algorithms in similar ways
by improving the value function using our kernel loss. %

\section{Conclusion}
This paper studies the fundamental problem of solving Bellman equations with parametric value functions.  A novel kernel loss is proposed, which is easy to be estimated and optimized using sampled transitions.  Empirical results show that, compared to prior algorithms, our method is convergent, produces more accurate value functions, and can be easily adapted for policy optimization.

These promising results open the door to many interesting directions for future work.  An important question is finite-sample analysis, quantifying how fast the minimizer of the empirical kernel loss converges to the true minimizer of the population loss, when data is \emph{not} i.i.d.  Another is to extend the loss to the online setting, where data arrives in a stream and the learner cannot store all previous data.  Such an online version may provide computational benefits in certain applications.  Finally, it may be possible to quantify uncertainty in the value function estimate, and use this uncertainty information to guide efficient exploration.

\section*{Acknowledgment}
This work is supported in part by NSF CRII 1830161 and NSF CAREER 1846421. We would like to acknowledge Google Cloud and and Amazon Web Services (AWS) for their support.  We also thank an anonymous reviewer and Bo Dai for helpful suggestions on related work that improved the paper.
\clearpage

\bibliography{kbe}
\bibliographystyle{icml2019}

\clearpage
\appendix
\onecolumn
\begin{center}
\Large
\textbf{Appendix}
\end{center}

\section{Proofs for \secref{sec:theory}}

\subsection{Proof of Theorem~\ref{thm:validate}}

The assertion that $\Lk(V) \ge 0$ for all $V$ is immediate from definition.  For the second part, we have
\begin{eqnarray*}
    \Lk(V) = 0
&\Longleftrightarrow& \norm{\R_\pi V}_{\k,\mu} = 0  \\
&\Longleftrightarrow& \norm{\R_\pi V \cdot \mu}_\k = 0 \\
&\Longleftrightarrow& 
 \R_\pi V(s) \mu(s) \aseq 0, \quad \forall s\in\Sset 
\qquad\text{(since $\k$ is an is ISPD kernel)} \\
&\Longleftrightarrow& \R_\pi V(s) \aseq  0  \quad  \forall s\in\Sset \\
&\Longleftrightarrow& V \aseq V^\pi\,.
\end{eqnarray*}

\subsection{Proof of Theorem~\ref{thm:Vfunction}} 

\newcommand{\Ppi}{\mathcal{P}_{\pi}}
\newcommand{\I}{\mathcal{I}}

Define $\delta  = V - V^\pi$ to be the value function error.
Furthermore, let $\I$ be the identity operator ($\I V = V$), and
$$
\Ppi V(s) \defeq \E_{a \sim \pi(\cdot|s), s'\sim P(\cdot|s,a)}[ \gamma V(s') ~|~s]
$$
the state-transition part of Bellman operator without the local reward term $R(s,a)$.

Note that $\R_\pi V^\pi = \B_\pi  V^\pi  - V^\pi =  0$ by the Bellman equation, so
$$
\R_\pi V =
\R_\pi V - \R_\pi V^\pi 
= (\Ppi V - V) - (\Ppi V^\pi - V^\pi)
= (\Ppi-\I) (V - V^\pi) = (\Ppi-\I) \delta\,. 
$$
Therefore, 
\begin{align*}
    \Lk(V) 
    & = \E_{\mu} 
    [\mathcal \R_\pi V(s) \cdot \mathcal \R_\pi V(\bar s) \cdot \k(s,\bar s) ] \\
    & = \E_{\mu} 
    [\mathcal (\I-\Ppi)\delta(s) \cdot (\I-\Ppi)\delta(\bar s) \cdot \k(s,\bar s) ] \\
    &= \E_{(s,s'),(\bar s, \bar s') \sim d_{\pi,\mu}} [(\delta(s) - \gamma \delta(s')) \cdot (\delta(\sbar) - \gamma \delta(\sbar')) \cdot \k(s,\sbar) ], %
\end{align*}
where $\E_{d_{\pi,\mu}}[\cdot]$
denotes the expectation under 
the joint distribution 
$$
d_{\pi,\mu}(s, s') \defeq \mu(s) \sum_{a\in\Aset}\pi(a|s) P(s'|s,a) .
$$
Expanding the quadratic form above, we have 
\begin{align*}
    \lefteqn{\Lk(V)} \\
    &= \E_{d_{\pi,\mu}} [
    (\delta(s) \k(s,\bar s) \delta(\bar s) - \gamma \delta(s')\delta(\bar s) \k(s,\bar s) 
    -  \gamma \delta(\bar s')\delta(s) \k(s,\bar s) 
    + \gamma^2 \delta(s')  \delta(\bar s')  \k(s,\bar s) ] \\
    & = \E_{\mu}[\delta(s') \k^*(s',\bar s')\delta(\bar s')], 
\end{align*}
where $\k^*(s',\bar s')$ is as defined in the theorem statement:
\begin{align*} 
\k^*(s',\bar s') =
\E_{d^*_{\pi,\mu}}\left[ \k(s',\bar s') 
- \gamma (\k(s',\bar s)  
+ \k( s, \bar s'))
+ \gamma^2 \k(s, \bar s) ~|~ (s',\bar s')\right]
\end{align*}
with the expectation w.r.t. the following ``backward'' conditional probability 
$$
d^*_{\pi,\mu}(s~|~s') \defeq \frac{\sum_{a\in\Aset}\pi(a|s) P(s'|s,a) \mu(s)}{\mu(s')},
$$
which can be heuristically viewed as the distribution of state $s$ conditioning on observing its next state $s'$ when following $d_{\pi,\mu}(s,s')$. 

\subsection{Proof of Proposition~\ref{thm:eigen}}

Using the eigen-decomposition \eqref{equ:mercer},  we have 
\begin{align*}
    \Lk(V)
    & = \E_{\mu}[\R_\pi V(s) \k(s,\bar s) \R_\pi V(\bar s)] \\
    & =  \E_{\mu}[\R_\pi V(s)\sum_{i=1}^\infty \lambda_i e_i(s) e_i(\bar s)  \R_\pi V(\bar s)] \\ 
    & = \sum_{i=1}^\infty \lambda_i \left (\E_{\mu}[\R_\pi V(s) e_i(s)] \right)^2. 
\end{align*}
The decomposition of $L_2(V)$ follows directly from
Parseval's identity. 

\subsection{Proof of Proposition~\ref{thm:rkhs}}

The reproducing property of RKHS implies $f(s) = \langle f, ~\k(s,\cdot) \rangle_{\H_\k}$ for any $f \in \H_\k$. Therefore, 
\begin{align*}
 \E_{\mu}[\R_\pi V(s) f(s)] 
& = \E_{\mu} [\R_\pi V(s) \langle f, ~ \k(s, \cdot) \rangle_{\H_\k}] \\ 
& =  \langle f, ~ \E_{\mu} [\R_\pi V(s) \k(s, \cdot) ] \rangle_{\H_\k} \\  
& = \langle f, ~ f^* \rangle_{\H_\k}. 
\end{align*}
where we have defined $f^*(\cdot) \defeq \E_{\mu} [\R_\pi V(s) \k(s, \cdot) ]$. Maximizing $\langle f, ~~ f^* \rangle$ subject to $\norm{f}_{\H_\k} \defeq \sqrt{\langle f, f \rangle_{\H_\k}} \leq 1$ yields that $f = f^*/\norm{f^*}_{\H_\k}$.  Therefore,
$$
\max_{f \in \H_\k:~\norm{f}_{\H_\k}\le1}
 \left (\E_{s} \left [ \R_\pi V(s) f(s) \right ] \right )^2 = (\langle \frac{f^*}{\norm{f^*}_{\H_\k}}, ~ f^* \rangle_{\H_\k})^2 = \norm{f^*}_{\H_\k}^2.
$$
Further, we can show that 
\begin{eqnarray*}
\norm{f^*}_{\H_\k}^2 
&=& \langle f^*, ~ f^* \rangle_{\H_\k}  \\
&=& \langle \E_{\mu} [\R_\pi V(s) \k(s, \cdot) ], ~~ 
\E_{\mu} [\R_\pi V(\bar s) \k(\bar s, \cdot) ]  \rangle_{\H_\k} \\
&=& \E_{\mu} [\R_\pi V(s) \k(s,\bar s) \R_\pi V(\bar s) ],
\end{eqnarray*}
where the last step follows from the reproducing property, $\k(s,\bar s) = \langle \k(s,\cdot), \k(\bar s,\cdot) \rangle_{\H_\k}$.  This completes the proof, by definition of $\Lk(V)$.

\subsection{Proof of \corref{cor:same_as_td}}

Under the conditions of the corollary, the kernel loss becomes the Norm of the Expected TD Update (NEU), whose minimizer coincides with the TD solution~\citep{dann14policy}.  For completeness, we provide a self-contained proof.

Since we are estimating the value function of a fixed policy, we ignore the actions, and the set of transitions is $\Dset = \{(s_i,r_i,s_i')\}_{1 \le i \le n}$.  Define the following vector/matrices:
\begin{eqnarray*}
r &=& \left[ r_1; ~ r_2,\cdots;~ r_n\right] \in \Rset^{n\times 1}\,, \\
X &=& \left [\phi(s_1); ~\phi(s_2); ~\dots;~ \phi(s_n) \right ]  \in \Rset^{n\times d}\,, \\
X' &=& \left [ \phi(s_1'); ~\phi(s_2');~ \dots;~  \phi(s_n')\right ]  \in \Rset^{n\times d}\,,
\end{eqnarray*}
and $Z = X - \gamma X'$, where $d$ is the feature dimension.
Then, the TD solution is given by
\[
\thetaTD = (X^\mt Z)^\mi X^\mt r\,.
\]
Note that the above includes both the on-policy case as well as the off-policy case as in many previous algorithms with linear value function approximation~\citep{dann14policy}, where the difference is in whether $s_i$ is sampled from the state occupation distribution of the target policy or not.

Define $\delta\in\Rset^{n \times 1}$ to be the TD error vector; that is, $\delta = r - Z \theta$, where
$
\delta_i = r_i + \gamma V(s_i') - V(s_i) = r_i + \theta^\mt (\gamma \phi(s_i') - \phi(s_i))\,.
$
With a linear kernel, our objective function becomes:
\[
\ell(\theta) = \frac{1}{n^2} \sum_{i,j} \delta_i \k(s_i,s_j) \delta_j = \frac{1}{n^2} \delta^\mt X X^\mt \delta = \frac{1}{n^2} (r-Z\theta)^\mt X X^\mt (r-Z\theta) \,.
\]
Its gradient is given by
\begin{align*}
\nabla \ell &= \frac{2}{n^2}(Z^\mt X X^\mt Z \theta - Z^\mt X X^\mt r)\,.
\end{align*}
Letting $\nabla\ell=0$ gives the solution obtained by minimizing our kernel loss:\footnote{For simplicity, we assume all involved matrices of size $d \times d$ are non-singular, as is typical in analyzing TD algorithms.  Without this assumption, we may either add $L_2$-regularization to $XX^\mt$~\citep{farahmand08regularized}, for which the same equivalence between TD and ours can be proved, or show that the solutions lie in an affine space in $\Rset^d$ but the corresponding value functions are identical.}
\begin{align*}
\hat{\theta}_{\operatorname{KBE}} &= (Z^\mt X X^\mt Z)^\mi Z^\mt X X^\mt r\,.
\end{align*}
Therefore,
\begin{align*}
\hat{\theta}_{\operatorname{KBE}} - \hat{\theta}_{\operatorname{TD}} 
&= \left((Z^\mt X X^\mt Z)^\mi Z^\mt X - (X^\mt Z)^\mi\right) X^\mt r \\
&= \left((Z^\mt X X^\mt Z)^\mi Z^\mt X (X^\mt Z) - I\right) (X^\mt Z)^\mi X^\mt r \\
&= \left(I - I\right) (X^\mt Z)^\mi X^\mt r = 0\,.
\end{align*}

\section{Experiment Details}

\newcommand{\smallergap}{\vspace{-2mm}}

\subsection{Kernel Loss Estimation with Batch Samples}\label{sec:batch_lv}
Given a set of empirical data $\D = \{(s_i,a_i,r_i,s_i')\}_{1\le i \le n}$, where $n$ is large such that we need to use a subset samples $\B = \{(S_i,A_i,R_i,S_i')\}_{1\le i \le m}$ drawn from $\D$ to estimate the empirical kernel loss.
One way to estimate $\Lk$ using the subset $\B$ is \emph{U-statistics},
\begin{align*}
    \hat{L}_{KU}(V_\theta) 
\defeq \frac{1}{m(m-1)}
\sum_{1 \le i \ne j \le m}
\k(S_i, S_j) \cdot \hat\R_\pi V_\theta(S_i) \cdot \hat\R_\pi V_\theta (S_j)\,.
\end{align*}

Similarly, we can use the \emph{V-statistics} to estimate $L_{K}$ given the subset $\B$:
\begin{align*}
\hat{L}_{KV}(V_\theta) \defeq \frac{1}{mn} & \Bigg(\bigg(\sum_{1 \le i \le m}\k(S_i, S_i) \cdot \hat\R_\pi V_\theta(S_i) \cdot \hat\R_\pi V_\theta (S_i)\bigg)  \\
& + \frac{n-1}{m-1} \bigg(\sum_{1\le i\ne j \le m}\k(S_i, S_j) \cdot \hat\R_\pi V_\theta(S_i) \cdot \hat\R_\pi V_\theta (S_j)\bigg)\Bigg)\,.
\end{align*}

In our experiments, we observe that V-statistics works slightly better than U-statistics, and we use an mixed combination of these two to achieve better performance: $\alpha\hat{L}_{KV}(V_\theta) +(1 - \alpha) \hat{L}_{KU}(V_\theta)$, where $\alpha$ is a coefficient which can be tunned.

\subsection{Policy Evaluation}
\label{sec:exp_eval}
We compare our method with representative policy evaluation methods including TD(0), FVI, RG, nonlinear GTD2~\citep{maei09convergent} and SBEED~\citep{dai2017learning,dai18sbeed} on three different stochastic environments: Puddle World, CartPole and Mountain Car. 
Followings are the detail of the policy evaluation experiments.
\smallergap

\paragraph{Network Structure}
We parameterize the value function $V_{\theta}(s)$ using a fully connected neural network with one hidden layer of $80$ units, 
using \texttt{relu} as activation function.
For test function $f(s)$ in SBEED, 
we use a small neural network with $10$ hidden units and \texttt{relu} as activation function.
\smallergap

\paragraph{Data Collection}
For each environment, we randomly collect $5000$ independent transition tuples with states uniformly drawn from state space using a policy $\pi$ learned by policy optimization,
for which we want to learn the value function $V^{\pi}(s)$.
\smallergap

\paragraph{Estimating the true value function $V^\pi(s)$}
To evaluate and compare all methods, we approximate the true value function by finely discretizing the state space and then applying tabular value iteration on the discretized MDP.
Specifically, we discretize the state space into $25 \times 25$ grid for Puddle World,
 $20 \times 25$ discrete states for CartPole, and 
$30 \times 25$ discrete states for Mountain Car.
\smallergap

\paragraph{Training Details}
For each environment and each policy evaluation method,
we train the value function $V_{\theta}(s)$ on the collected $5000$ transition tuples for $2000$ epochs ($3000$ for Mountain Car), with a batch size $n=150$ in each epoch using Adam optimizer.
We search the learning rate in $\{0.003, 0.001, 0.0003\}$ for all methods and report the best result averaging over $10$ trials using different random seeds.
For our method, we use a Gaussian RBF kernel $\k(s_i, s_j) = \exp{(-\norm{s_i - s_j}^2_2/h^2)}$ and take the bandwidth to be $h=0.5$.
For FVI, we update the target network at the end of each epoch training.
For SBEED, we perform $10$ times gradient ascent updates on the test function $f(s)$ and $1$ gradient descent update on $V_{\theta}(s)$ at each iteration.
We fix the discount factor to $\gamma = 0.98$ for all environments and policy evaluation methods.

\subsection{Policy Optimization}
\label{sec:exp_detail}
In this section we describe in detail the experimental setup for policy optimization regarding implementation and hyper-parameter search.  The code of Trust-PCL is available at github.\footnote{\url{https://github.com/tensorflow/models/tree/master/research/pcl_rl}}
Algorithm~\ref{alg:kloss_pcl} describes details in pseudocode, where the the main change compared to Trust-PCL is highlighted.  
Note that as in previous work, we use the $d$-step version of Bellman operator, an immediate extension to the $d=1$ case described in the main text.

\subsubsection{Network Architectures}
\label{sec:net_arc}
We use fully-connected feed-forward neural network to represent both policy and value network.
The policy $\pi_{\theta}$ is represented by a neural network with $64 \times 64$ hidden layers with \texttt{tanh} activations. 
At each time step $t$, the next action $a_t$ is sampled randomly from a Gaussian distribution $\mathcal{N}(\mu_\theta(s_t), \sigma_\theta)$.
The value network $V_\theta(s)$ is represented by a neural network with $64 \times 64$ hidden layers with \texttt{tanh} activations. At each time step $t$, the network is given the observation $s_t$ and it produces a single scalar output value. All methods share the same policy and value network architectures.

\begin{algorithm}[t]
\small
\caption{K-Loss for PCL}
\label{alg:kloss_pcl}
\begin{algorithmic}
\STATE \textbf{Input:} rollout step $d$, batch size $B$, coefficient $\lambda,~ \tau, \alpha$.
\STATE Initialize $V_\theta(s)$, $\pi_{\phi}(a|s)$, and empty replay buffer $RB(\beta)$. Set $\tilde{\phi} = \phi$.
\REPEAT
\vspace{1mm}
\STATE \textit{// Collecting Samples}
\STATE Sample $P$ steps $s_{t:t+P} \sim \pi_{\phi}$ on ENV.
\STATE Insert  $s_{t: t+P}$ to $RB(\beta)$.
\vspace{2mm}
\STATE \textit{// Train}
\STATE Sample batch $\{s_{t:t+d}^{(k)}, a_{t:t+d}^{(k)}, r_{t:t+d}^{(k)}\}_{k=1}^B$ from $RB(\beta)$ to contain a total of $Q$ transitions ($B \approx Q / d$).
\STATE \textcolor{blue}{$\Delta \theta= \alpha\nabla_{\theta}\hat{L}_{KV}(V_\theta) + (1 - \alpha)\nabla_{\theta}\hat{L}_{KU}(V_\theta)$,}
\STATE $\Delta \phi = -\frac{1}{B}\sum_{1 \leq i \leq B}[\hat{\mathcal{R}}_i\sum_{t=0}^{d-1}\nabla_{\phi}\log\pi_{\phi}(a_{t+i}|s_{t+i})]$, where
$$
\hat{\mathcal{R}}_i = -V_{\theta}(s_i) + \gamma^{d}V_{\theta}(s_{i+d}) + \sum_{t=0}^{d-1}\gamma^{t}(r_{i+t} - (\lambda + \tau)\log \pi_{\phi}(a_{t+i} | s_{t+1}) + \tau \log \pi_{\tilde{\phi}}(a_{t+i} | s_{t+1})).
$$
\STATE Update $\theta$ and $\phi$ using ADAM with $\Delta\theta, \Delta\phi$.
\vspace{2mm}
\STATE \textit{// Update auxiliary variables}
\STATE Update $\tilde{\phi} = \alpha \tilde{\phi}  + (1 - \alpha )\phi$.
\vspace{1mm}
\UNTIL{Convergence}
\end{algorithmic}
\end{algorithm}

\begin{figure}[t]
    \centering
    \setbox1=\hbox{\includegraphics[height=10.2em]{figures/benchmark2/legend_bench.pdf}}
    \begin{tabular}{cc}
         \includegraphics[width=.32\linewidth]{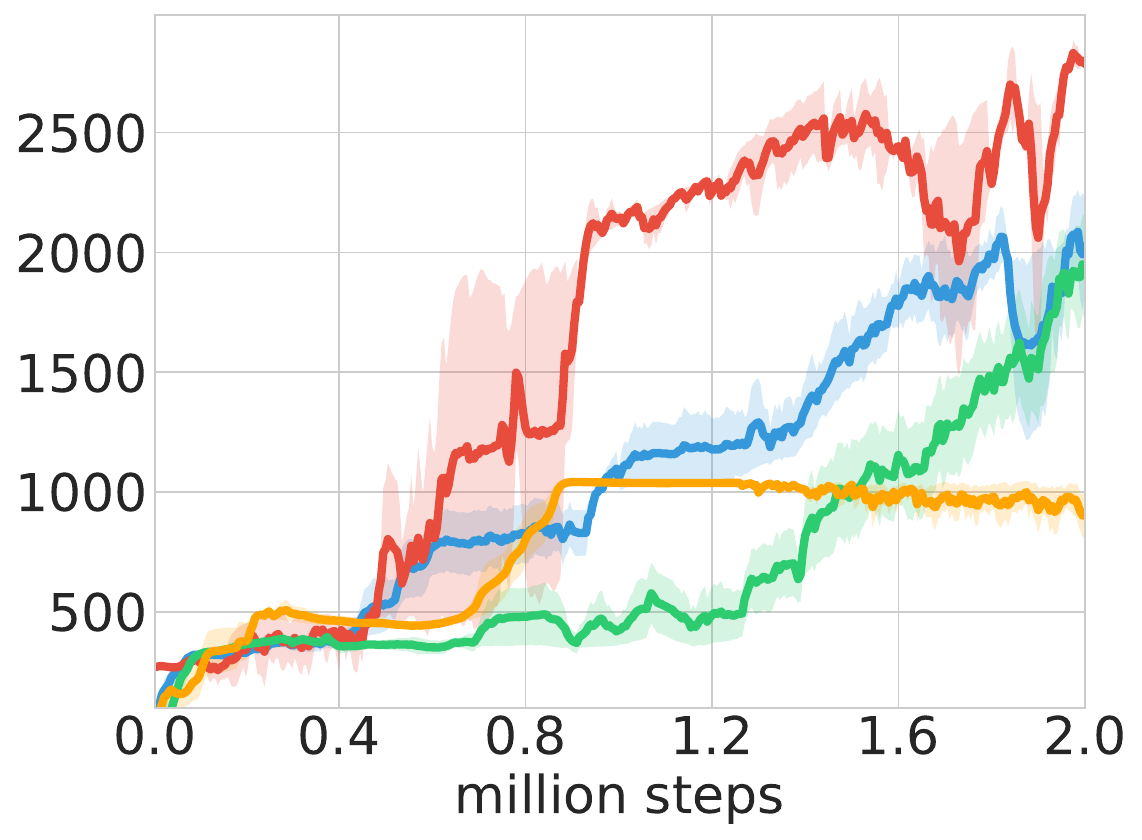}& 
         \hspace{2.5em}
          \includegraphics[width=.32\linewidth]{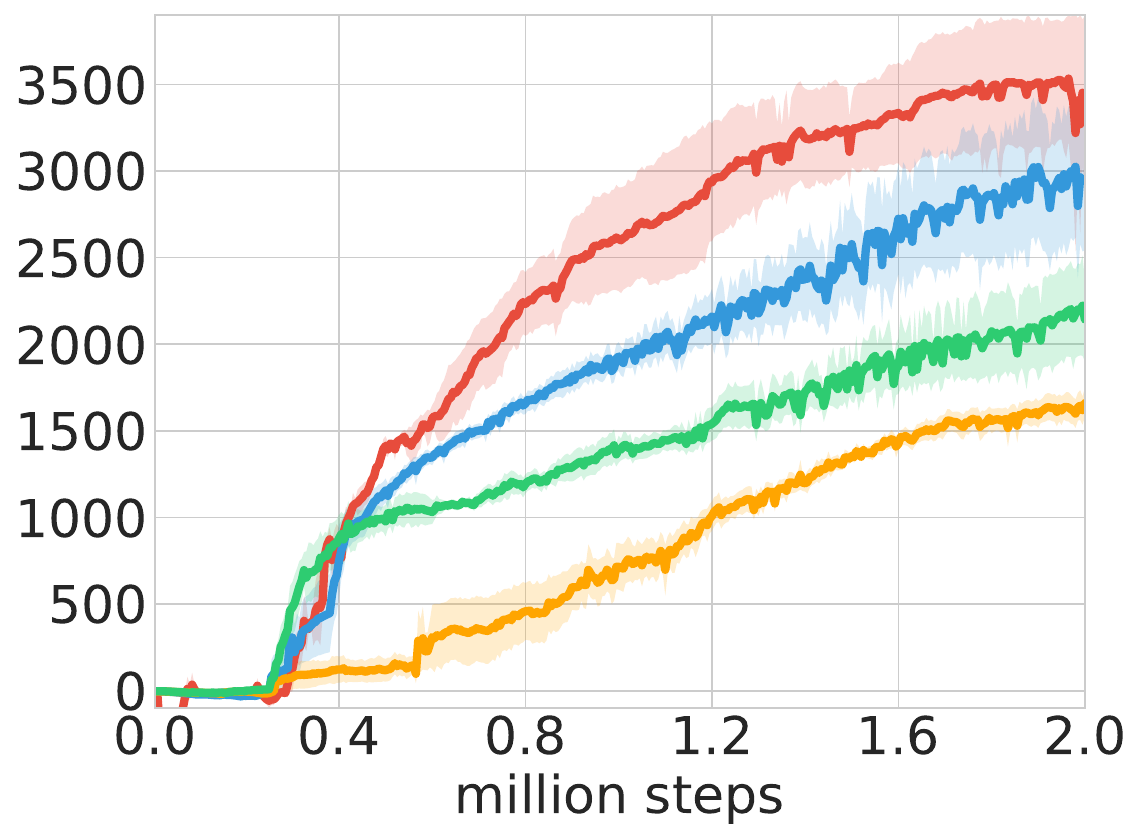}\llap{\makebox[\wd1][l]{\raisebox{5.8em}{\includegraphics[height=3.3em]{figures/benchmark2/legend_bench.pdf}}}}\\
          (a) \footnotesize Walker2d & (b) \footnotesize HalfCheetah \\ 
    \end{tabular}
    \caption{More results of various variants of Trust PCL on Mujoco Benchmark (on top of Figure~\ref{fig:mujoco_bench}).}
    \label{fig:more_bench}
\end{figure}

\subsubsection{Training Details}
We average over the best 5 of 6 randomly seeded training runs and evaluate each method using the mean $\mu_\theta (s)$ of the diagonal Gaussian policy $\pi_{\theta}$. 
Since Trust-PCL is off-policy, we collect experience 
and train on batches of experience sampled from the replay buffer. 
At each training iteration, we will first sample $T=10$ timestep samples and add them to the replay buffer, then both the policy and value parameters are updated in a single gradient step using the Adam optimizer with a proper learning rate searched, using a minibatch randomly sampled from replay buffer.
For Trust-PCL using FVI updating $V_\theta(s)$, which requires a target network to estimate the final state for each path, we use an exponentially moving average, with a smoothing constant $\tau=0.99$, to update the target value network weights as common in the prior work \citep{mnih15human}.
For Trust-PCL using TD(0), we will directly use current value network $V_\theta(s)$ to estimate the final states except we do not perform gradient update for the final states.
For Trsut-PCL using RG and K-loss, which has an objective loss, we will directly perform gradient descent to optimize both policy and value parameters.

\subsubsection{Hyperparameter Search}
We follow the same hyperparameter search procedure in \citet{nachum2018trustpcl} for FVI, TD(0) and RG based Trust-PCL.\footnote{See README in \url{https://github.com/tensorflow/models/tree/master/research/pcl_rl}} 
We search the maximum divergence $\epsilon $ between $\pi_{\theta}$ and $\pi_{\hat \theta}$ among $\in \{0.001, 0.0005, 0.002\}$, and parameter learning rate in $\{0.001, 0.0003, 0.0001\}$, and the rollout length $d \in \{1, 5, 10\}$. We also searched with the entropy coefficient $\lambda$, either keeping it at a constant $0$ (thus, no exploration) or decaying it from $0.1$ to $0.0$ by a smoothed exponential rate of $0.1$ every $2500$ training iterations. For each hyper-parameter setting, we average best $5$ of $6$ seeds and report the best performance for these methods.

For our proposed K-loss, we also search the maximum divergence $\epsilon$ but keep the learning rate as $0.001$. Additionally, for K-loss we use a Gaussian RBF kernel $\k([s_i, a_i], [s_j, a_j]) = \exp{(-(\norm{s_i - s_j}^2_2 + \norm{a_i - a_j}^2_2)/h)}$, and take the bandwidth to be $h =(\alpha \times \rm{med})^2$, where we search $\alpha \in \{0.1, 0.01, (1 / \sqrt{\log B})\}$, and $B=64$ is the gradient update batch size.
We fix the discount to $\gamma=0.995$ for all environments and batch size $B = 64$ for each training iteration.

\clearpage

\end{document}